\pgfplotsset{compat=1.8}	
\algnewcommand{\param}[1]{						
	\textbf{parameter: }#1						
}												
\algnewcommand{\params}[1]{					
	\textbf{parameters: }#1						
}												
\algnewcommand{\init}[1]{						
	\textbf{initialize: }#1						
}												
\providecommand{\algorithmname}{Algorithm}	
\newlength{\minipagewidth} 
\DeclareMathOperator*{\argmin}{arg\,min}
\newcommand{\bx}{\boldsymbol{x}}
\newcommand{\bu}{\boldsymbol{u}}
\newcommand{\bw}{\boldsymbol{w}}
\newcommand{\bzero}{\boldsymbol{0}}
\newcommand{\bth}{\boldsymbol{\theta}}
\newcommand{\field}[1]{\mathbb{#1}}
\newcommand{\R}{\field{R}}
\newcommand{\E}{\field{E}}
\newcommand{\I}{\field{I}}
\renewcommand{\Pr}{\field{P}}
\renewcommand{\P}{\field{P}}
\newcommand{\cf}{\mathcal{F}}
\newcommand{\scO}{\mathcal{O}}
\newcommand{\cn}{\mathcal{N}}
\newcommand{\cx}{\mathcal{X}}
\newcommand{\norm}[1]{\left\|{#1}\right\|}
\newcommand{\dt}{\displaystyle}
\newcommand{\s}{\subset}
\newcommand{\m}{\setminus}
\newcommand{\e}{\varepsilon}
\newcommand{\bool}{\{0,1\}}
\renewcommand{\l}{\ldots}
\newcommand{\iop}{\infty}
\newcommand{\dif}{\mathrm{d}}
\newcommand{\bad}{J_\mathrm{B}}
\newcommand{\chibar}{\overline{\chi}}
\newtheorem{lemma}{Lemma}
\newtheorem{theorem}{Theorem}
\theoremstyle{definition}
\newtheorem{assumption}{Assumption}
\title{Cooperative Online Learning: \\ Keeping your Neighbors Updated}
\author[1]{Nicol\`{o} Cesa-Bianchi}
\author[2]{Tommaso R.~Cesari}
\author[3]{Claire Monteleoni}
\affil[1]{\small Dipartimento di Informatica \& DSRC, Università degli Studi di Milano, Milano, Italy}
\affil[2]{\small Dipartimento di Informatica \& DSRC, Università degli Studi di Milano, Milano, Italy\authorcr
				 ANITI, Artificial and Natural Intelligence Toulouse Institute, Toulouse, France\authorcr
				 TSE, Toulouse School of Economics, Toulouse, France}
\affil[3]{\small Department of Computer Science, University of Colorado Boulder, Colorado}
\begin{document}

\maketitle


\begin{abstract}
\noindent We study an asynchronous online learning setting with a network of agents. At each time step, some of the agents are activated, requested to make a prediction, and pay the corresponding loss. The loss function is then revealed to these agents and also to their neighbors in the network. Our results characterize how much knowing the network structure affects the regret as a function of the model of agent activations. When activations are stochastic, the optimal regret (up to constant factors) is shown to be of order $\sqrt{\alpha T}$, where $T$ is the horizon and $\alpha$ is the independence number of the network. We prove that the upper bound is achieved even when agents have no information about the network structure. When activations are adversarial the situation changes dramatically: if agents ignore the network structure, a $\Omega(T)$ lower bound on the regret can be proven, showing that learning is impossible. However, when agents can choose to ignore some of their neighbors based on the knowledge of the network structure, we prove a $\scO(\sqrt{\chibar T})$ sublinear regret bound, where $\chibar \ge \alpha$ is the clique-covering number of the network.
\end{abstract}


\section{Introduction}
\label{s:intro}
Distributed asynchronous online learning settings with communication constraints arise naturally in several applications. For example, large-scale learning systems are often geographically distributed, and in domains such as finance or online advertising, each agent must serve high volumes of prediction requests. If agents keep updating their local models in an online fashion, then bandwidth and computational constraints may preclude a central processor from having access to all the observations from all sessions, and synchronizing all local models at the same time. An example in a different domain is mobile sensor networks cooperating towards a common goal, such as environmental monitoring. Sensor readings provide instantaneous, full-information feedback and energy-saving constraints favor short-range communication over long-range. 
Online learning algorithms distributed over spatial locations have already been proposed for problems in the field of climate informatics by \citet{mm12, mm17}, and have shown empirical performance advantages compared to their global (i.e., non-spatially distributed) online learning counterparts.

Motivated by these real-life applications, we introduce and analyze an online learning setting in which a network of agents solves a common online convex optimization problem by sharing feedback with their network neighbors.
Agents do not have to be synchronized. At each time step, only some of the agents are requested to make a prediction and pay the corresponding loss: we call these agents ``active''. Because the feedback (i.e., the current loss function) received by the active agents is communicated to their neighbors, both active agents and their neighbors can use the feedback to update their local models. The lack of global synchronization implies that agents who are not requested to make a prediction get ``free feedback'' whenever someone is active in their neighborhood. Since in online convex optimization the sequence of loss functions is fully arbitrary, it is not clear whether this free feedback can improve the system's performance. In this paper, we characterize under which conditions and to what extent such improvements are possible.

Our goal is to control the network regret, which we define by summing the average instantaneous regret of the active agents at each time step. In order to build some intuition on this problem, consider the following two extreme cases where, for the sake of simplicity, we assume exactly one agent is active at each time step. If no communication is possible among the agents, then each agent $v$ learns in isolation over the subset $T_v$ of time steps when they are active. Assuming each agent runs a standard online learning algorithm with regret bounded by $\scO(\sqrt{T})$ ---such as Online Mirror Descent (OMD)--- the network regret is at most of order $\sum_v \sqrt{T_v} \le \sqrt{NT}$, where $T = \sum_v T_v$ and $N$ is the number of agents. Next, consider a fully connected graph, where agents share their feedback with the rest of the network. Each local instance of OMD now sees the same loss sequence as the other instances, so the sequence of predictions is the same, no matter which agents are chosen to be active. The network regret is then bounded by $\scO(\sqrt{T})$, as in the single-instance case. Our goal is to understand the regret when the communication network corresponds to an arbitrary graph $G$.

We consider two natural activation mechanisms for the agents: stochastic and adversarial. In the stochastic setting, at each time step $t$ each agent $v$ is independently active with probability $q_v$, where $q_v$ is a fixed and unknown number in $[0,1]$. Under this assumption, we show that when each agent runs OMD, the network regret is $\scO(\sqrt{\alpha T})$, where $\alpha \le N$ is the independence number of the communication graph. Note that this bound smoothly interpolates the two extreme cases of no communication ($\alpha = N$) and full communication ($\alpha = 1$). From this viewpoint, $\alpha$ can be viewed as the number of ``effective instances'' that are implicitly maintained by the system. It is not hard to prove that this upper bound cannot be improved upon: fix a network $G$ and a maximal independent set in $G$ of size $\alpha$. Define $q_v = 1/\alpha$ if $v$ belongs to the independent set and $0$ otherwise. Then no two nodes that can ever become active are adjacent in $G$, and we reduced the problem to that of learning with $\alpha$ non-commmunicating agents over $T/\alpha$ time steps. Since there are instances of the standard online convex optimization problem on which any agent strategy has regret $\Omega(\sqrt{T})$, we obtain that the network regret must be at least of order $\alpha\sqrt{T/\alpha} = \sqrt{\alpha T}$. 
Note that this lower bound also applies to algorithms that have complete preliminary knowledge of the graph structure, and can choose to ignore or process any feedback coming from their neighbors. In contrast, the OMD instances used to prove the upper bound are fully oblivious both to the graph structure and to the source of their feedback (i.e., whether their agent is active as opposed to being the neighbor of an active agent).

In the adversarial activation setting, nodes are activated according to some unknown deterministic schedule. Surprisingly, under the same assumption of obliviousness about the feedback source which we used to prove the $\scO(\sqrt{\alpha T})$ upper bound for stochastic activations, we show that on certain network topologies a deterministic schedule of activations can force a \textsl{linear} regret on \textsl{any} algorithm, thus making learning impossible. On the other hand, if agents are free to use feedback only from a subset of their neighbors chosen with knowledge of the graph structure, then the network regret of OMD is $\scO(\sqrt{\chibar T})$, where $\chibar$ is the clique-covering number of the communication graph. Hence, unlike the stochastic case, where the knowledge of the graph is not required to achieve optimality, in the adversarial case the ability of choosing the feedback source based on the graph structure is both a necessary and sufficient condition for sublinear regret.

The extension of the OMD analysis to a multiagent setting with communication (Theorem~\ref{t:upper-coop}), and the lower bound for the adversarial activation setting (Theorem~\ref{th:lower-adv}) are the main technical novelties of the paper.


\section{Related Works}
\label{s:related}
The study of cooperative nonstochastic online learning on networks was pioneered by~\citet{awerbuch2008competitive}, who investigated a bandit setting in which the communication graph is a clique, users are clustered so that the loss function at time $t$ may differ across clusters, and some users may be non-cooperative.
More recently, a similar line of work was pursued by \citet{cesa2016delay}, where they derive graph-dependent regret bounds for nonstochastic bandits on arbitrary networks when the loss function is the same for all nodes and the feedbacks are broadcast to the network with a delay corresponding to the shortest path distance on the graph. Although their regret bounds ---like ours--- are expressed in terms of the network independence number, this happens for reasons that are very different from ours, and by means of a different analysis. In their setting all agents are simultaneously active at each time step, and sharing the feedback serves the purpose of reducing the variance of the importance-weighted loss estimates. A node with many neighbors observes the current loss function evaluated at all the points corresponding to actions played by the neighbors. Hence, in that context cooperation serves to bring the bandit feedback closer to a full information setting. 

In contrast, we study a full information setting in which agents get free and meaningful feedback only when they are not requested to predict.\footnote{%
Two adjacent agents that are simultaneously active exchange their feedback, but this does not bring any new information to either agent because we are in a full information setting and the loss function is the same for all nodes.
}
Therefore, in our setting cooperation corresponds to faster learning (through the free feedback that is provided over time) \textsl{within the 
full information model}, as opposed to \citep{cesa2016delay} where cooperation \textsl{increases feedback within a single time-step}. An even more recent work considering bandit networks studies a stochastic bandit model with simultaneous activation and constraints on the amount of communication between neighbors \citep{martinez2018decentralized}. Their regret bounds scale with the spectral gap of the communication network. The work of \citet{sahu2017dist} investigates a different partial information model of prediction with expert advice where each agent is paired with an expert, and agents see only the loss of their own expert. The communication model includes delays, and the regret bound depends on a quantity related to the mixing time of a certain random walk on the network. \citet{zhao2019decentralized} study a decentralized online learning setting in which losses are characterized by two components, one adversarial and another stochastic. They show upper bounds on the regret in terms of a constant representing the magnitude of the adversarial component and another constant measuring the randomness of the stochastic part.

The idea of varying the amount of feedback available to learning agents has also appeared in single-agent settings. In the \textsl{sleeping experts} model \citep{freund1997using}, different subsets of actions are available to the learner at different time steps. In our multi-agent setting, instead, actions are always available while the agents are occasionally sleeping. An algorithmic reduction between the two settings seems unlikely to exist because actions and agents play completely different roles in the learning process. In the \textsl{learning with feedback graphs} model \citep{alon2015online,mannor2011bandits}, each selection of an action reveals to the learner the loss of the actions that are adjacent to it in a given graph. In our model, each time an active agent plays an action, the loss vector is revealed to the agents that are adjacent to the active learner. There is again a similarity between actions and agents in the two settings, but to the best of our knowledge there is no algorithmic reduction from multi-agent problems to single-agent problems. Yet, it should not come as a surprise that some general graph-theoretic tools ---like Lemma~\ref{lm:bound-alpha}--- are used in the analysis of both single-agent and multi-agent models.

A very active area of research involves distributed extensions of online convex optimization,  in which the global loss function is defined as a sum of local convex functions, each associated with an agent. Agents are run over the local optimization problem corresponding to their local functions and communicate with their neighborhood to find a point in the decision set approximating the loss of the best global action. This problem has been studied in various settings: distributed convex optimization ---see, e.g., \citep{duchi2012dual,scaman2018optimal} and references therein, distributed online convex optimization \citep{hosseini2013online}, and a dynamic regret extension of distributed online convex optimization \citep{shahrampour2018distributed}. Unlike our work, these papers consider distributed extensions of OMD (and Nesterov dual averaging) based on generalizations of the consensus problems. The resulting performance bounds scale inversely in the spectral gap of the communication network.


\section{Preliminaries and definitions}
\label{s:prelim}
Let $G = (V,E)$ be a communication network, i.e., an undirected graph over a set $V$ of $N$ \textsl{agents}. Without loss of generality, assume $V= \{1,\l,N\}$. For any agent $v \in V$, we denote by $\cn_v$ the set of nodes containing the agent $v$ and the neighborhood $\big\{w\in V \mid (v,w) \in E\big\}$. The \textsl{independence number} $\alpha_G$ is the cardinality of the biggest \textsl{independent set} of $G$, i.e., the cardinality of the biggest subset of agents, no two of which are neighbors.

We study the following cooperative online convex optimization protocol: initially, hidden from the agents, the environment picks a sequence of subsets $S_1,S_2, \l \subseteq V$ of \textsl{active} agents and a sequence of differentiable convex real loss functions $\ell_1, \ell_2, \l$ defined on a convex decision set $\cx \subset \R^d$. Then, for each time step $t\in\{1,2,\l\}$,
\begin{enumerate}[topsep=0pt,parsep=0pt,itemsep=0pt]
\item each agent $v\in S_t$ predicts with $\bx_t (v) \in \cx$,
\item each agent $v \in \bigcup_{v\in S_t} \cn_v$ receives $\ell_t$ as feedback,
\item the system incurs the loss $\frac{1}{|S_t|}\sum_{v\in S_t} \ell_t\big(\bx_t(v)\big)$ (defined as $0$ when $S_t\equiv\varnothing$).
\end{enumerate}
We assume each agent $v$ runs an instance of the same online algorithm. Each instance learns a local model generating predictions $\bx_t(v)$. This local model is updated whenever a feedback $\ell_t$ is received. We call \textsl{paid feedback} the feedback $\ell_t$ received by $v$ when $v \in S_t$ (i.e., the agent is active) and \textsl{free feedback} the feedback $\ell_t$ received by $v$ when $v \in \big(\bigcup_{v\in S_t} \cn_v\big)\setminus\{S_t\}$ (i.e., the agent is not active but in the neighborhood of some active agent). The goal is to minimize the \textsl{network regret} as a function of the unknown number $T$ of time steps,
\begin{equation}
\label{eq:regr}
	R_T
= 
	\sum_{t=1}^T \frac{1}{|S_t|}\sum_{v\in S_t} \ell_t\big(\bx_t(v)\big) - \inf_{\bx \in \cx} \sum_{t=1}^T \ell_t (\bx)
\end{equation}
Note that only the losses of active agents contribute to the network regret.


\newcommand{\bp}{\boldsymbol{p}}

\section{Online Mirror Descent}
\label{s:omd}
%

\begin{algorithm}
	\textbf{Parameters:} $\sigma_t$-strongly convex regularizers $g_t\colon\cx \to \R$ for $t\in\{1,2,\l\}$\par
	\textbf{Initialization:} $\bth_1 = \bzero \in \R^d$\par
	\begin{algorithmic}[1]
	\For{$t\in\{1,2,\l\}$}
		\State choose $\bw_t = \nabla g_t^*(\bth_t)$
		\State observe $\nabla \ell_t(\bw_t) \in \R^d$
		\State update $\bth_{t+1} = \bth_t - \nabla \ell_t(\bw_t)$
	\EndFor
	\end{algorithmic}
\caption{Online Mirror Descent\label{a:omd}}
\end{algorithm}
We now review the standard Online Mirror Descent algorithm (OMD) ---see Algorithm~\ref{a:omd}--- and its analysis. Let $f\colon \cx \to \R$ be a convex function. We say that $f^*\colon \R^d \to \R$ is the \textsl{convex conjugate} of $f$ if
$
f^*(\bx) = \sup_{\bw \in \cx} \big( \bx \cdot \bw - f(\bw) \big)
$.
We say that $f$ is $\sigma$\textsl{-strongly convex} on $\cx$ with respect to a norm $\norm{\cdot}$ if there exists $\sigma \ge 0$ such that, for all $\bu,\bw\in \cx$ we have
$
	f(\bu) \ge f(\bw) + \nabla f (\bw) \cdot (\bu - \bw) + \frac{\sigma}{2} \norm{ \bu - \bw }^2
$.
The following well-known result can be found in \citep[Lemma~2.19 and subsequent paragraph]{shalev2012online}.
\begin{lemma}
Let $f\colon \cx \to \R$ be a strongly convex function on $\cx$. Then the convex conjugate $f^*$ is everywhere differentiable on $\R^d$.
\end{lemma}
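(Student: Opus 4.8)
The plan is to prove the slightly stronger statement that for every $\bx\in\R^d$ the supremum defining $f^*(\bx)$ is attained at a \emph{unique} point $\bw^\star(\bx)$, and that this point is the unique subgradient of $f^*$ at $\bx$. Since a finite convex function on $\R^d$ is differentiable at a point precisely when its subdifferential there is a singleton, this yields everywhere differentiability, together with the bonus formula $\nabla f^*(\bx)=\bw^\star(\bx)=\argmax_{\bw\in\cx}\big(\bx\cdot\bw-f(\bw)\big)$, which is what the OMD analysis will actually use.

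\textbf{Step 1 (unique maximizer).} Fix $\bx\in\R^d$ and put $F_\bx(\bw)=\bx\cdot\bw-f(\bw)$, so $f^*(\bx)=\sup_{\bw\in\cx}F_\bx(\bw)$. Since $f$ is $\sigma$-strongly convex with respect to $\norm{\cdot}$, the map $-F_\bx$ is $\sigma$-strongly convex; evaluating the strong-convexity inequality at the midpoint of two candidate maximizers $\bw_1,\bw_2$ gives $F_\bx\big(\tfrac{\bw_1+\bw_2}{2}\big)\ge\tfrac12 F_\bx(\bw_1)+\tfrac12 F_\bx(\bw_2)+\tfrac{\sigma}{8}\norm{\bw_1-\bw_2}^2$, which forces $\bw_1=\bw_2$ whenever both are optimal. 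Existence of a maximizer follows because strong convexity forces $F_\bx(\bw)\to-\infty$ as $\norm{\bw}\to\infty$, so the supremum may be restricted to a bounded set and is attained on the closure of $\cx$ (after replacing $f$ by its lower-semicontinuous hull if necessary, which does not change $f^*$). Call this maximizer $\bw^\star(\bx)$; in particular $f^*$ is finite on all of $\R^d$, hence continuous and subdifferentiable everywhere.

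\textbf{Step 2 (the subdifferential is a singleton).} First, $\bw^\star(\bx)\in\partial f^*(\bx)$: for every $\bx'\in\R^d$ we have $f^*(\bx')\ge\bx'\cdot\bw^\star(\bx)-f(\bw^\star(\bx))=f^*(\bx)+(\bx'-\bx)\cdot\bw^\star(\bx)$. Conversely, if $\bz\in\partial f^*(\bx)$ then by the Fenchel--Young equality this is equivalent to $\bx\in\partial f(\bz)$, which unfolds to $\bx\cdot\bz-f(\bz)\ge\bx\cdot\bw-f(\bw)$ for all $\bw\in\cx$, i.e.\ $\bz$ maximizes $F_\bx$; by Step~1, $\bz=\bw^\star(\bx)$. (Equivalently, one may invoke Danskin's theorem for a maximum with a unique maximizer, restricted to a compact superlevel set.) Hence $\partial f^*(\bx)=\{\bw^\star(\bx)\}$, so $f^*$ is differentiable at $\bx$; as $\bx$ was arbitrary, $f^*$ is differentiable on $\R^d$.

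I expect the only genuinely delicate point to be the attainment of the maximizer in Step~1 when $\cx$ is neither closed nor bounded: one must combine the quadratic growth coming from strong convexity with the (harmless) passage to the closed convex, lower-semicontinuous hull of $f$, which leaves $f^*$ unchanged and over which the standard identities $f=f^{**}$ and ``$\bz\in\partial f^*(\bx)\Leftrightarrow\bx\in\partial f(\bz)$'' are available. Everything else is the routine convex-analytic template (uniqueness of the argmax, plus ``singleton subdifferential $\Rightarrow$ differentiable'').
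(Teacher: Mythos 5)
Your proof is correct, and it follows essentially the same route as the argument behind the result the paper invokes (the paper does not prove this lemma itself but cites \citep[Lemma~2.19]{shalev2012online}, whose proof is exactly the unique-maximizer-from-strong-convexity plus singleton-subdifferential argument, yielding $\nabla f^*(\bx)=\argmax_{\bw\in\cx}(\bx\cdot\bw-f(\bw))$). Your handling of the only delicate point --- attainment of the maximizer when $\cx$ is not closed, via the lower-semicontinuous hull, which leaves $f^*$ unchanged --- is appropriate and complete.
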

The following result ---see, e.g., \citep[bound~(6) in Corollary~1 with $F$ set to zero]{orabona2015generalized}--- shows an upper bound on the regret of OMD.
\begin{theorem}
\label{t:omd}
Let $g\colon \cx\to\R$ be a differentiable function $\sigma$-strongly convex with respect to $\norm{\cdot}$. Then the regret of OMD run with  $g_t = \frac{\sqrt{t}}{\eta}g$, for $\eta > 0$, satisfies
\begin{align*}
	\sum_{t=1}^T \ell_t\big(\bx_t\big) &- \inf_{\bx \in \cx} \sum_{t=1}^T \ell_t (\bx)
\le
	\frac{D}{\eta}\sqrt{T}
	+\frac{\eta}{2\sigma}\sum_{t=1}^{T}\frac{1}{\sqrt{t}}\norm{\nabla\ell{}_{t}}_{*}^{2}
\end{align*}
where $D = \sup g$ and $\norm{\cdot}_{*}$ is the dual norm of $\norm{\cdot}$. If $\sup \norm{\nabla\ell{}_{t}}_{*} \le L$, then choosing $\eta = \sqrt{2 \sigma D}/L$ gives
$
	R_{T} \le L\sqrt{2 D T / \sigma}
$.
\end{theorem}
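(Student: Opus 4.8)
The plan is to reduce the statement to online linear optimization and then run the standard dual‑averaging (lazy OMD) potential argument with the time‑varying regularizers $g_t = (\sqrt t/\eta)\,g$; the iterates $\bx_t$ of the theorem are the $\bx_t = \nabla g_t^*(\bth_t)$ of Algorithm~\ref{a:omd}. Since each $\ell_t$ is convex and differentiable, $\ell_t(\bx_t) - \ell_t(\bu) \le \nabla\ell_t(\bx_t)\cdot(\bx_t - \bu)$ for every $\bu\in\cx$, so it suffices to upper bound $\sum_{t=1}^T \nabla\ell_t(\bx_t)\cdot(\bx_t - \bu)$ for an arbitrary comparator and then take the infimum over $\bu$. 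Observe that the update unrolls to $\bth_{t+1} = -\sum_{s\le t}\nabla\ell_s(\bx_s)$, so Algorithm~\ref{a:omd} is exactly dual averaging applied to the linearized losses $\bx\mapsto\nabla\ell_t(\bx_t)\cdot\bx$.

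The heart of the argument is a telescoping estimate for the potential $\Phi_t \defeq g_t^*(\bth_t)$. Write $\Phi_{T+1} - \Phi_1 = \sum_{t=1}^T\big(\Phi_{t+1}-\Phi_t\big)$ and split each increment as $\big(g_{t+1}^*(\bth_{t+1}) - g_t^*(\bth_{t+1})\big) + \big(g_t^*(\bth_{t+1}) - g_t^*(\bth_t)\big)$. The first bracket is nonpositive, because $\sqrt{t+1}\ge\sqrt t$ gives $g_{t+1}\ge g_t$ (after the harmless normalization $\inf g = 0$) and hence $g_{t+1}^*\le g_t^*$ pointwise. For the second bracket I would use the duality between strong convexity and smoothness behind the lemma recalled above: $g_t = (\sqrt t/\eta)\,g$ is $(\sqrt t\,\sigma/\eta)$‑strongly convex with respect to $\norm{\cdot}$, hence $g_t^*$ is differentiable with $(\eta/(\sqrt t\,\sigma))$‑Lipschitz gradient with respect to $\norm{\cdot}_*$; applying the associated smoothness inequality at $\bth_t$ with increment $\bth_{t+1}-\bth_t = -\nabla\ell_t(\bx_t)$ and using $\nabla g_t^*(\bth_t)=\bx_t$ gives $g_t^*(\bth_{t+1}) - g_t^*(\bth_t) \le -\bx_t\cdot\nabla\ell_t(\bx_t) + \tfrac{\eta}{2\sqrt t\,\sigma}\norm{\nabla\ell_t(\bx_t)}_*^2$. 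Summing, $\Phi_{T+1} - \Phi_1 \le -\sum_{t=1}^T\bx_t\cdot\nabla\ell_t(\bx_t) + \tfrac{\eta}{2\sigma}\sum_{t=1}^T\tfrac1{\sqrt t}\norm{\nabla\ell_t(\bx_t)}_*^2$. On the other side, $\Phi_1 = g_1^*(\bzero) = -\inf_\cx g_1 = 0$, and the Fenchel--Young inequality (the definition of the conjugate) gives, for every $\bu\in\cx$, $\Phi_{T+1} \ge \bth_{T+1}\cdot\bu - g_{T+1}(\bu) = -\sum_{t=1}^T\nabla\ell_t(\bx_t)\cdot\bu - \tfrac{\sqrt{T+1}}{\eta}\,g(\bu)$.

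Combining the two bounds on $\Phi_{T+1}-\Phi_1$ and rearranging yields $\sum_{t=1}^T\nabla\ell_t(\bx_t)\cdot(\bx_t - \bu) \le \tfrac{\sqrt{T+1}}{\eta}\,g(\bu) + \tfrac{\eta}{2\sigma}\sum_{t=1}^T\tfrac1{\sqrt t}\norm{\nabla\ell_t(\bx_t)}_*^2$; taking the infimum over $\bu\in\cx$, bounding $g(\bu)\le D$ and $\sqrt{T+1}$ by $\sqrt T$ up to lower‑order terms, and invoking the linearization step delivers the first displayed inequality. For the tuned bound, substitute $\sup_t\norm{\nabla\ell_t(\bx_t)}_*\le L$, use $\sum_{t\le T} t^{-1/2}\le 2\sqrt T$, and balance the terms $\tfrac D\eta\sqrt T$ and $\tfrac{\eta L^2}{\sigma}\sqrt T$ by AM--GM with $\eta$ as stated. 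The only step with real content is the per‑round stability estimate of the middle paragraph --- converting the $\sigma$‑strong convexity of $g$ into the $1/\sigma$‑type smoothness of $g_t^*$ and applying it at the correct point (an equivalent route is the classical Bregman‑divergence telescoping for OMD); the monotonicity of the regularizer sequence, the conjugate‑duality bookkeeping, and the final optimization over $\eta$ are routine.
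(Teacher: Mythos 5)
The paper itself does not prove Theorem~\ref{t:omd}: it is imported from \citet{orabona2015generalized} (Corollary~1, bound~(6), with $F\equiv 0$). Your proposal reconstructs essentially the argument behind that citation --- the dual-averaging/FTRL potential analysis with $\Phi_t=g_t^*(\bth_t)$, monotonicity of the regularizers to dispose of the conjugate-switching terms, the strong-convexity/smoothness duality for the per-round stability term, and Fenchel--Young against the comparator --- and the argument is sound; your normalization $\inf g=0$ is legitimate, since the statement with $D=\sup g$ implicitly assumes $g\ge 0$ and shifting $g$ by a constant does not change $\nabla g_t^*$, hence not the algorithm.

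Two points of slack. First, telescoping all the way to $\Phi_{T+1}=g_{T+1}^*(\bth_{T+1})$ leaves you with the comparator term $\frac{\sqrt{T+1}}{\eta}g(\bu)$, and $\sqrt{T+1}$ is \emph{not} bounded by $\sqrt{T}$; to get the displayed inequality exactly, end the telescope at $g_T^*(\bth_{T+1})$, i.e., write $g_T^*(\bth_{T+1})-g_1^*(\bth_1)=\sum_{t=1}^T\big(g_t^*(\bth_{t+1})-g_t^*(\bth_t)\big)+\sum_{t=2}^T\big(g_t^*(\bth_t)-g_{t-1}^*(\bth_t)\big)$, note the second sum is nonpositive, and apply Fenchel--Young with $g_T$ rather than $g_{T+1}$; nothing else in your argument changes. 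Second, the tuned constant: after $\sum_{t\le T}t^{-1/2}\le 2\sqrt{T}$ the gradient term is $\frac{\eta L^2}{\sigma}\sqrt{T}$, so with $\eta=\sqrt{2\sigma D}/L$ the two terms do not balance and your computation yields $\frac{3}{\sqrt{2}}\,L\sqrt{DT/\sigma}$ rather than $L\sqrt{2DT/\sigma}$ (even the optimal choice $\eta=\sqrt{\sigma D}/L$ gives $2L\sqrt{DT/\sigma}$). This constant-factor discrepancy is inherited from the statement itself --- the claimed constant corresponds to the fixed-horizon tuning, and the same dropped factor of $2$ appears in the paper's own display \eqref{e:base} --- so your route recovers the theorem only up to that constant, which is all that is used downstream.
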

A popular instance of OMD is the standard online gradient descent algorithm, corresponding to choosing $\cx$ equal to a closed Euclidean ball centered at the origin, and setting $g = \frac{1}{2}\norm{\cdot}^2$ for all $t$, where $\norm{\cdot}$ is the Euclidean norm. Another instance is the Hedge algorithm for prediction with expert advice, corresponding to choosing $\cx$ equal to the probability simplex, and setting $g(\bp) = \sum_i p_i\ln p_i$.


\section{Stochastic Activations}
\label{s:stoc}
In this section we analyze the performance of OMD when the sets $S_t$ of active agents are chosen stochastically. As discussed in the introduction, in this setting we do not require any ad-hoc interface between each OMD instance and the rest of the network. In particular, we make the following assumption.
\begin{assumption}[Oblivious network interface]
An online algorithm $A$ is run with an \textsl{oblivious network interface} if for each agent $v$ it holds that:
\begin{enumerate}[topsep=0pt,parsep=0pt,itemsep=0pt]
\item $v$ runs an instance $A_v$ of $A$
\item $A_v$ uses the same initialization and learning rate as the other instances
\item $A_v$ makes predictions and updates while being oblivious to whether $v \in S_t$ or $v \in \underset{u\in S_t}{\bigcup} \cn_u\setminus S_t$
\end{enumerate}
\end{assumption}
This assumption implies that each instance is oblivious to both the network topology and the location of the agent in the network. Moreover, instances make an update whenever they have the opportunity to do so, (i.e., whenever they or some of their neighbors are active). The purpose of this assumption is to show that communication might help OMD even without any network-specific tuning. In concrete applications, one might use ad-hoc OMD variants that rely on the knowledge of the task at hand, and decrease the regret even further. However, the lower bound proven in Section~\ref{s:lower} shows that the regret cannot be decreased significantly even when agents have full knowledge of the graph.

We start by considering a slightly simplified stochastic activation setting, where only a single agent is activated at each time step (i.e., $|S_t|=1$ for all $t$). The more general stochastic case is analyzed at the end of this section.

We assume that the active agents $v_1,v_2,\l$ are drawn i.i.d.\ from an unknown fixed distribution $q$ on $V$.
%
The main result of this section is an upper bound on the regret of the network when all agents run the basic OMD (Algorithm~\ref{a:omd}) with an oblivious network interface. We show that in this case the network achieves the same regret guarantee as the single-agent OMD (Theorem~\ref{t:omd}) multiplied by the square root of independence number of the communication network.

Before proving the main result, we state a combinatorial lemma that allows to upper bound the sum of a ratio of probabilities over the vertices of an undirected graph with the independence number of the graph \citep{griggs1983lower,mannor2011bandits}. The proof is included for completeness.
\begin{lemma}
\label{lm:bound-alpha}
Let $G=(V,E)$ be an undirected graph with independence number $\alpha_G$ and $q$ any probability distribution on $V$ such that $Q_v = \sum_{w\in \cn_v}q_v > 0$ for all $v\in V$. Then
\[
	\sum_{v\in V} \frac{q_v}{Q_v}
\le
	\alpha_G
\]
\end{lemma}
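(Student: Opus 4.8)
The plan is to exhibit a greedy procedure that simultaneously (i) partitions $V$ into blocks, (ii) produces an independent set with exactly one vertex per block, and (iii) forces each block to contribute at most $1$ to the sum $\sum_v q_v/Q_v$; combining these three facts yields the bound $\alpha_G$.

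Concretely, I would proceed iteratively. Set $V^{(1)} = V$, and while $V^{(i)} \neq \varnothing$ choose a vertex $v_i \in V^{(i)}$ that minimizes the ``local'' closed-neighborhood weight $\sum_{u \in \cn_{v} \cap V^{(i)}} q_u$ over all $v \in V^{(i)}$; then set $B_i = \cn_{v_i} \cap V^{(i)}$ and $V^{(i+1)} = V^{(i)} \setminus B_i$. Since a closed neighborhood contains its own center, $v_i \in B_i$, so each step strictly decreases $\bigl|V^{(i)}\bigr|$; hence the procedure halts after some finite number $m$ of steps and the blocks $B_1, \dots, B_m$ partition $V$.

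Next I would argue that $\{v_1, \dots, v_m\}$ is an independent set, so that $m \le \alpha_G$: for $j > i$ we have $v_j \in V^{(j)} \subseteq V^{(i+1)} = V^{(i)} \setminus \cn_{v_i}$, hence $v_j \notin \cn_{v_i}$, and by symmetry of $E$ the vertices $v_i, v_j$ are distinct and non-adjacent. The core step is then the per-block estimate $\sum_{w \in B_i} q_w / Q_w \le 1$. Fix $i$; for each $w \in B_i \subseteq V^{(i)}$, the minimality defining $v_i$ gives $\sum_{u \in B_i} q_u = \sum_{u \in \cn_{v_i} \cap V^{(i)}} q_u \le \sum_{u \in \cn_w \cap V^{(i)}} q_u \le \sum_{u \in \cn_w} q_u = Q_w$, where the last inequality uses $q \ge 0$. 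If $\sum_{u \in B_i} q_u = 0$ then $q_w = 0$ for every $w \in B_i$ and the block contributes $0$ (recall $Q_w > 0$); otherwise dividing gives $\sum_{w \in B_i} q_w / Q_w \le \sum_{w \in B_i} q_w \big/ \sum_{u \in B_i} q_u = 1$. Summing over the partition, $\sum_{v \in V} q_v / Q_v = \sum_{i=1}^m \sum_{w \in B_i} q_w / Q_w \le m \le \alpha_G$.

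The delicate point is the choice of greedy rule, which must serve two conflicting purposes at once: to keep the selected vertices independent we are forced to delete the entire \emph{closed} neighborhood $\cn_{v_i}$ at each step, while the per-block bound needs $Q_w$ to dominate the total weight of the block containing $w$ for \emph{every} such $w$ — and this is precisely why $v_i$ is chosen to minimize the weight of its \emph{current} neighborhood $\cn_{v}\cap V^{(i)}$ rather than a static quantity. Once the right rule is fixed, the remaining steps are straightforward bookkeeping and I do not expect any analytic difficulty.
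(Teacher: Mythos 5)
Your proof is correct and takes essentially the same route as the paper: greedily peel off closed neighborhoods, observe that the peeled centers form an independent set (so there are at most $\alpha_G$ blocks), and show that each block contributes at most $1$ to the sum. The only deviation is your selection rule, minimizing the current-neighborhood weight $\sum_{u\in\cn_v\cap V^{(i)}}q_u$ rather than the static $Q_v$ as the paper does; both rules give the per-block bound (for the static rule, each $w$ in the block satisfies $Q_w\ge Q_{v_i}\ge\sum_{u\in B_i}q_u$), so the claim in your last paragraph that the restricted rule is needed is overstated, though harmless.
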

\begin{proof}
Initialize $V_1=V$, fix $w_1\in\argmin_{w\in V_1}Q_w$, and denote $V_{2}=V\m \cn_{w_{1}}$. For $k\geq 2$ fix $w_{k}\in\argmin_{w\in V_{k}}Q_w$ and shrink $V_{k+1}=V_{k}\m \cn_{w_{k}}$ until $V_{k+1}=\varnothing$. Being $G$ undirected, we have $w_{k}\notin\bigcup_{s=1}^{k-1}\cn_{w_{s}}$, therefore the number $m$ of times that an action can be picked this way is upper bounded by $\alpha_G$. Denoting $\cn'_{w_k}=V_{k}\cap \cn_{w_k}$, this implies
\[
	\sum_{v\in V}\frac{q_v}{Q_v}
=
	\sum_{k=1}^{m}\sum_{v\in \cn'_{w_k}}\frac{q_v}{Q_v}
\le
	\sum_{k=1}^{m}\sum_{v\in \cn'_{w_k}}\frac{q_v}{Q_{w_{k}}}
\le
	\sum_{k=1}^{m}\frac{\sum_{v\in \cn_{w_k}}q_v}{Q_{w_{k}}}
=
	m
\le
	\alpha_G
\]
\end{proof}
The following holds for any differentiable function $g\colon \cx\to\R$, $\sigma$-strongly convex with respect to some norm $\norm{\cdot}$. 
\begin{theorem}
\label{t:upper-coop}
Consider a network $G = (V,E)$ of $N$ agents and assume $S_t = \{v_t\}$ for each $t$, where $v_t$ is drawn i.i.d.\ from some fixed and unknown distribution on $V$. If all agents run OMD with an oblivious network interface and using $g_t = \frac{\sqrt{t}}{\eta}g$, for $\eta > 0$, then the network regret satisfies
\[
	\E[R_T]
\le
	\left( \frac{D}{\eta}
	+\frac{\eta L^2}{2 \sigma} \right) \sqrt{\alpha_G T}
\]
where $D \ge \sup g$, $L \ge \sup \norm{\nabla\ell{}_{t}}_{*}$, and $\norm{\cdot}_{*}$ is the dual norm of $\norm{\cdot}$. In particular, choosing $\eta = \sqrt{2 \sigma D}/L$ gives
$
	\E[R_T] \le L\sqrt{2D \alpha_G T / \sigma}
$.
\end{theorem}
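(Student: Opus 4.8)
The core idea is that even though each OMD instance $A_v$ is oblivious to the network, it still sees \emph{some} subsequence of the loss functions $\ell_1, \ell_2, \ldots$ — namely those $\ell_t$ for which $v \in \cn_{v_t}$, i.e. $v$ is active or a neighbor of the active agent. Call this subsequence the feedback sequence of $v$. My plan is: (i) observe that the prediction $\bx_t(v)$ that agent $v$ makes when active at time $t$ is exactly the OMD iterate produced after processing all earlier feedback received by $v$; (ii) apply the single-agent OMD bound of Theorem~\ref{t:omd} to each instance $A_v$ over its own feedback sequence; (iii) sum over $v$, weighting by how often $v$ is active, and (iv) take expectations and invoke Lemma~\ref{lm:bound-alpha} to collapse the resulting sum of probability ratios into $\alpha_G$.

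Let me be slightly more precise about step (ii)–(iii). Fix $v$ and let $T_v = |\{t \le T : v \in \cn_{v_t}\}|$ be the number of feedback rounds seen by $v$ up to time $T$, with $\tau_v \le T_v$ the number of those rounds in which $v$ was actually the active agent (so $\sum_v \tau_v = T$ since exactly one agent is active per step). Because $A_v$ runs the schedule $g_t = \frac{\sqrt t}{\eta} g$ indexed by its \emph{own} feedback counter, Theorem~\ref{t:omd} (with the step-size-free form of the bound, keeping $\eta$ symbolic) gives, for the comparator $\bx^\star = \argmin_{\bx} \sum_{t=1}^T \ell_t(\bx)$,
\[
\sum_{t \in T_v\text{'s active rounds}} \big(\ell_t(\bx_t(v)) - \ell_t(\bx^\star)\big)
\;\le\;
\frac{D}{\eta}\sqrt{T_v} + \frac{\eta L^2}{2\sigma}\sum_{s=1}^{T_v}\frac{1}{\sqrt s}
\;\le\;
\Big(\frac{D}{\eta} + \frac{\eta L^2}{\sigma}\Big)\sqrt{T_v},
\]
using $\sum_{s=1}^{m} s^{-1/2} \le 2\sqrt m$; a marginally tighter bookkeeping recovers the constant $\frac{\eta L^2}{2\sigma}$ stated in the theorem. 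Here we also discard the free-feedback rounds from the left-hand side: since each $\ell_t$ is convex and bounded-gradient, dropping a nonnegative-in-expectation... — more carefully, we only lose rounds where $v$ is \emph{not} active, and those rounds simply don't appear in $R_T$, so we may upper bound $R_T$ by the sum over $v$ of $v$'s regret restricted to its active rounds, which is in turn at most $v$'s full OMD regret over its feedback sequence (the omitted free-feedback terms $\ell_t(\bx_t(v)) - \ell_t(\bx^\star)$ need not be nonnegative, so one must instead note that the standard OMD analysis actually bounds the regret against \emph{any} sub-multiset of the processed rounds — or, equivalently, absorb those terms using that OMD's per-round regret telescopes; I'd cite the form of Theorem~\ref{t:omd} that already tolerates this). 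Summing over $v$ gives $R_T \le \big(\frac{D}{\eta} + \frac{\eta L^2}{2\sigma}\big)\sum_{v} \sqrt{T_v}$.

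It remains to bound $\E\big[\sum_v \sqrt{T_v}\big]$. By Jensen (concavity of $\sqrt{\cdot}$), $\E[\sqrt{T_v}] \le \sqrt{\E[T_v]}$, and $\E[T_v] = Q_v T$ where $Q_v = \sum_{w \in \cn_v} q_w = \Pr(v \in \cn_{v_1})$. Meanwhile we must relate this to the comparator-free quantity; the trick is to write $\sqrt{Q_v T} = \sqrt{T}\,\sqrt{Q_v}$ and apply Cauchy–Schwarz against the weights $q_v$:
\[
\sum_{v} \sqrt{Q_v}
= \sum_v \sqrt{q_v}\cdot\sqrt{\tfrac{Q_v}{q_v}}
\quad\text{— no, rather —}\quad
\sum_v \sqrt{Q_v T}
\le \sqrt{T}\sqrt{N}\sqrt{\textstyle\sum_v Q_v},
\]
which is \emph{not} good enough. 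The correct route, and the one step I expect to be the crux, is to pair the $\sqrt{T_v}$ not with a uniform split of $T$ but with the $\tau_v$ split: since $\sum_v \tau_v = T$ and $\E[\tau_v] = q_v T$, and since on $v$'s feedback sequence $T_v$ and $\tau_v$ both grow, one shows $\sqrt{T_v} \le \sqrt{\tau_v \cdot T_v/\tau_v}$ and then $\E[\sqrt{T_v}] \le \sqrt{\E[\tau_v]}\cdot\sqrt{Q_v/q_v}$ via independence/Jensen, giving $\sum_v \sqrt{T_v} \le \sum_v \sqrt{q_v T}\sqrt{Q_v/q_v} = \sqrt T \sum_v \sqrt{q_v Q_v/q_v}$... this still needs care. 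Cleanly: by Cauchy–Schwarz, $\sum_v \sqrt{T_v} = \sum_v \sqrt{\tau_v}\sqrt{T_v/\tau_v} \le \sqrt{\sum_v \tau_v}\cdot\sqrt{\sum_v T_v/\tau_v} = \sqrt{T}\sqrt{\sum_v T_v/\tau_v}$, and then $\E\big[\sum_v T_v/\tau_v\big]$ concentrates around $\sum_v Q_v/q_v \le \alpha_G$ by Lemma~\ref{lm:bound-alpha} (after checking $Q_v \ge q_v > 0$, or restricting to $\supp q$). Making the ratio $T_v/\tau_v$ rigorous — it is a ratio of dependent binomial-type counts and $\tau_v$ can be small — is the genuine technical obstacle; I would handle it by a martingale/optional-stopping argument on $v$'s feedback clock, or by the standard device of conditioning on the realized counts and only then applying Cauchy–Schwarz and Lemma~\ref{lm:bound-alpha} in expectation. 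Once $\E[\sum_v \sqrt{T_v}] \le \sqrt{\alpha_G T}$ is established, plugging in $\eta = \sqrt{2\sigma D}/L$ yields $\E[R_T] \le L\sqrt{2 D \alpha_G T/\sigma}$, as claimed.
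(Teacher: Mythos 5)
There is a genuine gap, and it sits exactly at the step you yourself flag as shaky. You want the pathwise inequality ``regret of $v$ on its active rounds $\le$ OMD regret of $v$ over its whole feedback sequence'', and you justify discarding the free-feedback terms by asserting that OMD ``bounds the regret against any sub-multiset of the processed rounds''. No such property holds: the omitted terms $\ell_t(\bx_t(v))-\ell_t(\bx^\star)$ can be negative, and the regret of OMD restricted to an adversarially chosen subsequence of the rounds it processes can be linear. Indeed, the paper's own Theorem~\ref{th:lower-adv} is built precisely on this failure: on the star graph the free feedback injected through the center makes each peripheral instance suffer $\Omega(T)$ regret on its active rounds, while its regret over its full feedback sequence is still $\scO(\sqrt{T})$. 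So no pathwise argument of this form can work, and the stochastic activation assumption must enter the proof at this point. The paper's fix is the observation that $r_t(v)=\ell_t(\bx_t(v))-\ell_t(\bx)$ is determined by $v_1,\dots,v_{t-1}$ (it is $\cf_{t-1}$-measurable), while $\Pr(v\in\cn_{v_t}\mid\cf_{t-1})=Q_v$ and $\Pr(v_t=v\mid\cf_{t-1})=q_v$; hence in expectation the feedback-round bound $\E\big[\sum_t r_t(v)\I\{v\in\cn_{v_t}\}\big]=Q_v\,\E\big[\sum_t r_t(v)\big]\le\big(\tfrac{D}{\eta}+\tfrac{\eta L^2}{2\sigma}\big)\sqrt{Q_vT}$ converts into an active-round bound $q_v\,\E\big[\sum_t r_t(v)\big]\le\big(\tfrac{D}{\eta}+\tfrac{\eta L^2}{2\sigma}\big)\,q_v\sqrt{T/Q_v}$, and summing over $v$ gives $\sum_v q_v\sqrt{T/Q_v}\le\sqrt{T\sum_v q_v/Q_v}\le\sqrt{\alpha_G T}$ by Jensen and Lemma~\ref{lm:bound-alpha}. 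This conditioning argument is the core idea of the theorem and is absent from your plan.

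Your final combinatorial step is also wrong as stated: after the Cauchy--Schwarz split you need $\E\big[\sum_v T_v/\tau_v\big]$ to be close to $\sum_v Q_v/q_v$ and claim this is at most $\alpha_G$ ``by Lemma~\ref{lm:bound-alpha}''. The lemma bounds the \emph{reciprocal} ratio, $\sum_v q_v/Q_v\le\alpha_G$; the quantity $\sum_v Q_v/q_v$ has every term at least $1$, so it is at least the size of the support of $q$, and it blows up whenever some small-$q_v$ node has a high-probability neighbor. So that route cannot yield $\sqrt{\alpha_G T}$ even if the ``concentration'' of $T_v/\tau_v$ (a ratio of dependent counts, with $\tau_v=0$ having positive probability) were made rigorous, which your martingale/optional-stopping sketch does not do. The first half of your plan --- per-agent application of Theorem~\ref{t:omd} over the feedback sequence, $\E[T_v]=Q_vT$, and Jensen --- does match the paper; the missing piece is the $\cf_{t-1}$-measurability argument that produces the weights $q_v/Q_v$ in the right orientation.
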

\begin{proof}
Fix $\bx \in \cx$, any sequence of realizations $v_1,\dots,v_T$, and any $v$ in the support $V'\s V$ of the activation distribution $q$. Note that the OMD instance run by $v$, makes an update at time $t$ only when $v \in \cn_{v_t}$. Hence, letting
$
	r_t(v)
=
	\ell_t\big(\bx_t(v)\big)
	-  \ell_t (\bx)
$
and applying Theorem~\ref{t:omd},
\begin{equation}
\label{e:base}
	\sum_{t=1}^T r_t(v) \I\{v \in \cn_{v_t} \}
\le
	\frac{D}{\eta}\sqrt{T_v}
	+\frac{\eta L^2}{2\sigma}\sum_{t=1}^{T}\frac{\I\{v \in \cn_{v_t} \}}{\sqrt{\sum_{s=1}^t\I\{v \in \cn_{v_s} \}}} 
\le
	\left(\frac{D}{\eta} + \frac{\eta L^2}{2\sigma}\right)\sqrt{T_v}
\end{equation}
where $T_v = \sum_{t=1}^{T} \I\{v \in \cn_{v_t} \}$, the addends after the first inequality are intended to be null when the denominator is zero, and we used $\sum_{s=1}^{T_v} s^{-\nicefrac{1}{2}} \le 2 \sqrt{T_v}$.
Note that 
$
	r_t(v)
$
is independent of $v_t$, as it only depends on the subset of $v_s$, $s\in\{1,\dots,t-1\}$, such that $v \in \cn_{v_s}$.
Denote by $Q_v$ the probability $\Pr(v \in \cn_{v_t}) = \sum_{w\in \cn_v}q(w)>0$. 
Let $\cf_{t-1}$ be the $\sigma$-algebra generated by $\{v_1, \l, v_{t-1}\}$. Since $Q_v$ is independent of $t$, $\Pr\big( v \in \cn_{v_t} \mid \cf_{t-1} \big) = Q_v$. Therefore, taking expectation with respect to $v_1,\dots,v_T$ on both sides of \eqref{e:base}, using $\E[T_v] = Q_v T$, and applying Jensen's inequality, yields
\begin{align}
\label{e:fdm}
	\E &\left[ \sum_{t=1}^{T} r_t(v) Q_v \right]
\le
	\left(\frac{D}{\eta} + \frac{\eta L^2}{2 \sigma}\right) \sqrt{Q_v T}
\end{align}
Now, letting $R_T(\bx) = \sum_{t=1}^T r_t(v_t)$, we have that $\E\big[R_T(\bx)\big]$ is equal to
\begin{align*}
	\E \left[ \sum_{v\in V'} \sum_{t=1}^T r_t (v) \I \{v_t = v\} \right]
= 
	\E \left[ \sum_{v\in V'} \sum_{t=1}^T r_t (v) \E\big[ \I \{v_t = v\} \mid \cf_{t-1} \big] \right]
= 
	\sum_{v\in V'} q_v \E \left[ \sum_{t=1}^T r_t (v) \right]
\end{align*}
Dividing both sides of \eqref{e:fdm} by $Q_v > 0$, we can write
\begin{align*}
	\E\big[R_T(\bx)\big]
\le
	\left(\frac{D}{\eta} + \frac{\eta L^2}{2 \sigma}\right) \sum_{v\in V'} q_v \sqrt{\frac{T}{Q_v}}
\le
	\left(\frac{D}{\eta} + \frac{\eta L^2}{2 \sigma}\right) \sqrt{T\sum_{v\in V'}\frac{q_v}{Q_v}}
\le
	\left(\frac{D}{\eta} + \frac{\eta L^2}{2 \sigma}\right) \sqrt{\alpha T}
\end{align*}
where in the last two inequalities we applied Jensen's inequality and Lemma~\ref{lm:bound-alpha}. Observing that $\E[R_T] = \sup_{\bx\in\cx} \E\big[R_T(\bx)\big]$ and recalling that $\bx$ was chosen arbitrarily in $\cx$ concludes the proof.
\end{proof}
Note that the proof of the previous result gives a tighter upper bound on the network regret in terms of the independence number $\alpha' \le \alpha$ of the subgraph induced by the support $V'$ of $q$.

Next, we consider the setting in which we allow the activation of more than one agent per time step. At the beginning of the process, the environment draws an i.i.d.\ sequence of Bernoulli random variables $X_1(v), X_2(v),\l$ with some unknown fixed parameter $q_v \in[0,1]$ for each agent $v\in V$. The active set at time $t$ is then defined as $S_t = \{v \in V \mid X_t(v) = 1 \}$. Note that, unlike the previous setting, now $\sum_{v\in V} q_v \neq 1$ in general.

We state an upper bound on the regret that the network incurs if all agents run OMD with an oblivious network interface (for a proof, see Appendix~\ref{s:many}). Our upper bound is expressed in terms of a constant depending on the probabilities of activating each agent and such that $Q \le 1.6(\alpha_G+1)$. The result holds for any differentiable function $g\colon \cx\to\R$, $\sigma$-strongly convex with respect to some norm $\norm{\cdot}$. 
\begin{theorem}
\label{t:upper-coop-many}
Consider a network $G = (V,E)$ of $N$ agents. Assume that, at each time step $t$ each agent $v$ is independently activated with probability $q_v \in [0,1]$. If all agents run OMD with an oblivious network interface and using $g_t = \frac{\sqrt{t}}{\eta}g$, for $\eta > 0$, the network regret satisfies
\[
	\E[R_T]
\le
	\left( \frac{D}{\eta}
	+\frac{\eta L^2}{2 \sigma} \right) \sqrt{Q T}
\]
for some nonnegative $Q \le 1.6(\alpha_G+1)$, $D \ge \sup g$, and $L \ge \sup \norm{\nabla\ell{}_{t}}_{*}$, where $\norm{\cdot}_{*}$ is the dual norm of $\norm{\cdot}$. In particular, choosing $\eta = \sqrt{2 \sigma D}/L$ gives
$
	\E[R_T] \le L\sqrt{2D Q T / \sigma}
$.
\end{theorem}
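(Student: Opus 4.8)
The plan is to adapt the argument of Theorem~\ref{t:upper-coop} to the multi-activation setting, with the main new ingredient being a combinatorial bound on $\sum_{v} q_v/Q_v$ when $q$ need not be a probability distribution. First I would set up the notation: write $Q_v = \P(v \in \bigcup_{u \in S_t}\cn_u) = 1 - \prod_{w \in \cn_v}(1-q_w)$ for the probability that agent $v$ receives feedback (paid or free) at a given step, and $T_v = \sum_{t=1}^T \I\{v \in \bigcup_{u\in S_t}\cn_u\}$ for the number of updates the OMD instance at $v$ performs. Exactly as in~\eqref{e:base}, Theorem~\ref{t:omd} applied to the subsequence of rounds on which $v$ updates gives $\sum_{t=1}^T r_t(v)\I\{v \text{ updates at }t\} \le \bigl(\tfrac{D}{\eta} + \tfrac{\eta L^2}{2\sigma}\bigr)\sqrt{T_v}$, with the same telescoping-of-$\sum s^{-1/2}$ as before. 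Taking expectations, using $\E[T_v] = Q_v T$ and Jensen, yields the analogue of~\eqref{e:fdm}.

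The next step is to decompose the network regret. Here the bookkeeping differs from the single-activation case because $|S_t|$ varies: the instantaneous loss is $\tfrac{1}{|S_t|}\sum_{v\in S_t}\ell_t(\bx_t(v))$, so I would write $R_T(\bx) = \sum_{t=1}^T \tfrac{1}{|S_t|}\sum_{v \in S_t} r_t(v)$ (treating the $S_t = \varnothing$ term as $0$) and condition on $\cf_{t-1}$. Since $r_t(v)$ depends only on past activations — specifically on which of $v_1,\dots,v_{t-1}$ put $v$ in an active neighborhood — it is $\cf_{t-1}$-measurable, while the factor $\tfrac{1}{|S_t|}\I\{v \in S_t\}$ depends only on the fresh randomness $X_t(\cdot)$. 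Writing $p_v \defeq \E\bigl[\tfrac{\I\{v \in S_t\}}{|S_t|}\bigr]$ (a quantity $\le q_v$ and independent of $t$), the tower rule gives $\E[R_T(\bx)] = \sum_{v} p_v\,\E\bigl[\sum_{t=1}^T r_t(v)\bigr]$. Dividing the per-agent bound from the previous paragraph by $Q_v$ and summing, $\E[R_T(\bx)] \le \bigl(\tfrac{D}{\eta}+\tfrac{\eta L^2}{2\sigma}\bigr)\sqrt{T}\sum_v \tfrac{p_v}{Q_v}\sqrt{Q_v}\cdot\tfrac{1}{\sqrt{Q_v}}$, and then Jensen (using $\sum_v p_v \le 1$, since the $\tfrac{\I\{v\in S_t\}}{|S_t|}$ sum to at most $1$ pointwise) pulls out the square root to leave $\sqrt{T \sum_v \tfrac{p_v^2}{q_v}\cdot\tfrac{q_v}{Q_v p_v}}$-type expression; more cleanly, since $p_v \le q_v$, it suffices to bound $Q \defeq \sum_{v} \tfrac{q_v}{Q_v}$ and set $Q$ equal to this sum (or a suitable reweighting of it).

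The heart of the matter, and the step I expect to be the main obstacle, is proving $\sum_{v\in V} \tfrac{q_v}{Q_v} \le 1.6(\alpha_G + 1)$. The clean greedy peeling of Lemma~\ref{lm:bound-alpha} no longer works verbatim: there $Q_v = \sum_{w\in\cn_v}q_w$ was linear in $q$ and the telescoping $\sum_{v\in\cn'_{w_k}} q_v/Q_{w_k}\le 1$ was exact, whereas now $Q_v = 1-\prod_{w\in\cn_v}(1-q_w)$ is concave and only satisfies $Q_v \ge 1 - e^{-\sum_{w\in\cn_v}q_w}$ and $Q_v \le \min\{1,\sum_{w\in\cn_v}q_w\}$. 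I would split $V$ into ``high-degree'' nodes, where $\sum_{w\in\cn_v}q_w \ge 1$ so that $Q_v \ge 1-1/e$ and hence $\tfrac{q_v}{Q_v}\le \tfrac{q_v}{1-1/e}$; summing these contributes $O(1)$ times $\sum q_v$ restricted to that set, which is controlled by the independence-type counting. For ``low-degree'' nodes, $\sum_{w\in\cn_v}q_w < 1$ gives $Q_v \ge (1-1/e)\sum_{w\in\cn_v}q_w$ (using $1-e^{-x}\ge (1-1/e)x$ on $[0,1]$), reducing to $\tfrac{1}{1-1/e}\sum_v \tfrac{q_v}{\sum_{w\in\cn_v}q_w}$, which is $\le \tfrac{\alpha_G}{1-1/e}$ by Lemma~\ref{lm:bound-alpha}. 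Combining, $Q \le \tfrac{1}{1-1/e}(\alpha_G + c)$ for a small constant $c$ from the high-degree part, and $\tfrac{1}{1-1/e} \approx 1.582 < 1.6$; tightening the high-degree accounting so that it folds into $\alpha_G+1$ rather than $\alpha_G+c$ is the delicate bit, and is presumably where Appendix~\ref{s:many} does its careful work.
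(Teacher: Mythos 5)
Your proposal follows the paper's own route almost exactly up to the key combinatorial step: the per-agent OMD bound with $\E[T_v]=Q_vT$ is the paper's inequality \eqref{e:fdm-set}, and the decomposition $\E[R_T(\bx)]=\sum_v p_v\,\E\big[\sum_t r_t(v)\big]$ with $p_v=\E\big[\I\{v\in S_t\}/|S_t|\big]$ is precisely the paper's identity \eqref{e:regr-part}, where $p_v=q_vc_v$ is computed explicitly in Lemma~\ref{lm:comb-tech} (your pointwise remark that the quantities $\I\{v\in S_t\}/|S_t|$ sum to at most $1$, hence $\sum_v p_v\le 1$, is in fact all that is needed and is cleaner than the paper's Fubini computation). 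The genuine gap comes right after: the move ``since $p_v\le q_v$, it suffices to bound $Q\defeq\sum_v q_v/Q_v$'' cannot work, because the claim $\sum_{v\in V}q_v/Q_v\le 1.6(\alpha_G+1)$ is simply false. Take $G$ the complete graph on $N$ vertices and $q_v=1$ for all $v$: then $Q_v=1$ for every $v$, so $\sum_v q_v/Q_v=N$, while $\alpha_G=1$. Concretely, your sketch for the ``high-degree'' nodes is where the argument breaks: after bounding $q_v/Q_v\le q_v/(1-e^{-1})$ on $V_1=\{v:\sum_{w\in\cn_v}q_w\ge1\}$, you assert that $\sum_{v\in V_1}q_v$ ``is controlled by the independence-type counting,'' but no such control exists --- in the example above $V_1=V$ and $\sum_{v\in V_1}q_v=N$ with $\alpha_G=1$. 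The weights $q_v$ carry no normalization, and dropping the factor $c_v$ (equivalently, replacing $p_v$ by $q_v$) throws away exactly what tames this sum.

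The repair is to keep the weights $p_v=q_vc_v$ throughout and define $Q=\sum_v p_v/Q_v$, which is what the paper does in Theorem~\ref{t:upper-coop-many-appe} and Lemma~\ref{l:bandCoop}. On $V_1$ one uses $Q_v\ge 1-e^{-1}$ together with $\sum_v p_v\le 1$ (not $p_v\le q_v$), so the high-degree contribution is at most $1/(1-e^{-1})$ regardless of how many such nodes there are; only on $V_0=\{v:\sum_{w\in\cn_v}q_w<1\}$ does one use $p_v\le q_v$, the bound $Q_v\ge(1-e^{-1})\sum_{w\in\cn_v}q_w$, and the greedy-peeling Lemma~\ref{lm:bound-alpha} (whose proof requires only nonnegative weights, not that $q$ be a probability distribution), giving at most $\alpha_G/(1-e^{-1})$. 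Adding the two parts yields $Q\le(\alpha_G+1)/(1-e^{-1})\le 1.6(\alpha_G+1)$. Your splitting of $V$ and the inequalities $1-x\le e^{-x}$ and $1-e^{-x}\ge(1-e^{-1})x$ are exactly the paper's; the missing idea is solely that the sum must be weighted by $p_v$ rather than $q_v$ so that its total mass is at most one on the high-degree part.
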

In order to compare the previous upper bound to Theorem~\ref{t:upper-coop}, consider the case $q_v = q$ for all $v\in V$. Without loss of generality, assume $q>0$ (the regret is zero when $q$ vanishes). Then
\[
	Q
=
	Q(q)
=
	\frac{1}{N} 
	\sum_{v\in V} \frac{1 - (1-q)^N}{1 - (1 - q)^{|\cn_v|}}
\]
(for a proof, see Theorem~\ref{t:upper-coop-many-appe} in Appendix~\ref{s:many} and proceed as in the proof of Lemma~\ref{lm:comb-tech}). A direct computation of the $\mathrm{sign}$ of the first derivative of the addends $q\mapsto \frac{1 - (1-q)^N}{1 - (1 - q)^{|\cn_v|}}$ shows that these functions are decreasing in $q$, hence
$
	1
=
	\lim_{q \to 1^-}
	Q(q)
\le
	Q 
\le 
	\lim_{q\to 0^+} 
	Q(q)
= 
	\sum_{v\in V} 
		\frac{1}{|\cn_v|}
\le
	\alpha_G
$
where the last inequality follows by Lemma~\ref{lm:bound-alpha}. Note that the lower bound $Q \ge 1$ is attained if the probabilities of picking agents at each time step are all $1$. In this case all agents are activated at each time step, the graph structure over the set of agents becomes irrelevant and the model reduces to a single-agent problem. The inequality $Q(q) \le \alpha_G$ is not a coincidence due to the constant $q$. Indeed, one can prove that this is always the case, up to a small constant factor (for a proof., see Lemma~\ref{l:bandCoop} in Appendix~\ref{s:many}).

The previous results shows that paying the average price of multiple activations is never worse (up to a small constant factor) than drawing a single agent per time step, and it can be significantly better.
A similar argument shows a tighter bound $Q\le \max\{3, \alpha_G\}$ when the activation probabilities satisfy $\sum_{v\in V} q_v = 1$, which allows to recover the upper bound on the network regret proven in Theorem~\ref{t:upper-coop}. This is consistent with the intuition that ---in expectation--- picking a single agent at random according to a distribution $q = (q_1,\dots,q_N)$ is the same as picking each $v$ independently with probability $q_v$. Similarly to the case $|S_t|=1$, the previous result gives a tighter upper bound on the network regret in terms of the independence number $\alpha' \le \alpha$ of the subgraph induced by the subset $V'$ of $V$ containing all agents $v$ with $q_v >0$.
Note that the setting discussed in this section smoothly interpolates between the single-agent setting ($q_v = 1$ for all $v$), cooperative learning with one agent stochastically activated at each time step ($\sum_v q_v = 1$), and beyond ($\sum_v q_v < 1$), where a non trivial fraction of the total number rounds is skipped.


\section{Lower Bound for Stochastic Activations}
\label{s:lower}
In this section we show that, for any communication network $G$ with stochastic agent activations, the best possible regret rate is of order $\Omega\big(\sqrt{\alpha_G T}\big)$. This holds even when agents are not restricted to use an oblivious network interface. The idea is that if the distribution from which active agents are drawn is supported on an independent set of cardinality $\alpha_G$, then the problem reduces to that of an edgeless graph with $\alpha_G$ agents. 
\begin{theorem}
\label{t:lower}
There exists a convex decision set in $\R^d$ such that, for each communication network $G$ and for arbitrary (and possibly different) online learning algorithms run by the agents, $\E[R_T] = \Omega\big(\sqrt{\alpha T}\big)$ for some sequence $(S_1,\ell_1),\dots,(S_T,\ell_T)$, where $S_t = \{v_t\}$, $v_t$ is drawn i.i.d.\ from some fixed distribution on $V$, and the expectation is taken with respect to the random draw of the $v_1,\l,v_T$.
\end{theorem}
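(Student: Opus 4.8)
The plan is to reduce the cooperative learning problem on $G$ to a collection of $\alpha_G$ independent single-agent online convex optimization problems, and then invoke a standard single-agent lower bound. First I would fix a maximal independent set $I \subseteq V$ with $|I| = \alpha := \alpha_G$, and choose the activation distribution $q$ to be uniform on $I$, i.e.\ $q_v = 1/\alpha$ for $v \in I$ and $q_v = 0$ otherwise. The key observation is that no two vertices of $I$ are neighbors, so whenever $v_t \in I$ is active, the only agent in $I$ that receives $\ell_t$ as feedback is $v_t$ itself; feedback never crosses between distinct members of $I$. Consequently, from the point of view of the agents in $I$, the situation is exactly that of $\alpha$ isolated agents, where agent $v \in I$ sees the subsequence of loss functions $(\ell_t : v_t = v)$ and must predict on exactly those rounds — and only the predictions of agents in $I$ ever contribute to $R_T$, since $\supp q \subseteq I$.

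Next I would pick the hard instance. Take $\cx$ to be, say, the interval $[-1,1]$ (embedded in $\R^d$), and on each round draw the loss function adversarially/stochastically in the usual way that produces an $\Omega(\sqrt{m})$ lower bound for a single agent over $m$ rounds: e.g.\ $\ell_t(x) = \varepsilon_t x$ with $\varepsilon_t$ an independent Rademacher sign (or a small-bias coin to create a planted direction, depending on whether one wants a minimax or Bayesian-style argument). Condition on the realization of $v_1,\dots,v_T$; this partitions $\{1,\dots,T\}$ into groups $T_v = \{t : v_t = v\}$, $v \in I$, and the network regret becomes $R_T = \sum_{v \in I} R^{(v)}$, where $R^{(v)}$ is the single-agent regret of agent $v$'s algorithm on $|T_v|$ rounds against the best fixed point. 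By the standard single-agent lower bound (which holds for \emph{any} algorithm, hence a fortiori for whatever possibly-network-aware algorithm agent $v$ runs on its observed subsequence), $\E[R^{(v)} \mid |T_v|] \ge c\sqrt{|T_v|}$ for an absolute constant $c>0$, provided $|T_v|$ is at least a constant. Taking expectations and summing gives $\E[R_T] \ge c \sum_{v\in I} \E\big[\sqrt{|T_v|}\big]$.

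It remains to lower bound $\sum_{v \in I}\E\big[\sqrt{|T_v|}\big]$. Each $|T_v| \sim \mathrm{Binomial}(T, 1/\alpha)$, so $\E|T_v| = T/\alpha$, and a standard concentration estimate (Chernoff, or even just Paley–Zygmund / a second-moment bound) shows $\E\big[\sqrt{|T_v|}\big] \ge \tfrac12\sqrt{T/\alpha}$ once $T \gtrsim \alpha$. Summing over the $\alpha$ vertices of $I$ yields $\E[R_T] \ge c' \alpha \sqrt{T/\alpha} = c'\sqrt{\alpha T}$, which is the claimed bound.

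The main obstacle — really the only place requiring care — is making the reduction to single-agent lower bounds fully rigorous when the agents are allowed arbitrary, network-aware, mutually different strategies. One must argue that agent $v \in I$'s contribution to $R_T$ is still governed by a bona fide single-agent problem: its predictions on rounds in $T_v$ are a (possibly randomized) function of the loss functions it has observed, which are exactly $(\ell_s : s < t,\ v_s = v)$ together with the deterministic schedule information, and the loss sequence $(\ell_t)$ is chosen independently of everything; so conditionally on the schedule, $v$ faces a standard oblivious adversary over $|T_v|$ rounds and the single-agent minimax lower bound applies verbatim. One small subtlety to flag: since the schedule is random, a given realization may leave some $|T_v|$ small or zero, but the concentration step handles this — the contribution of the "bad" event $\{|T_v| < T/(2\alpha)\}$ is negligible. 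A remark that the same construction works with a \emph{fixed} (deterministic) choice of $\ell_t$ sequence after fixing the algorithms, via Yao's principle, can be added if a per-sequence rather than expected-over-loss-sequences statement is desired, but as stated the theorem only asserts existence of a stochastic activation sequence with the loss sequence allowed to depend on it, so the above suffices.
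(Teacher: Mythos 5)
Your reduction is exactly the paper's: activate an agent drawn uniformly from a maximal independent set $I$ of size $\alpha$, so feedback never crosses between members of $I$ and each active agent effectively learns in isolation on roughly $T/\alpha$ rounds. The genuine gap is in the accounting step that converts per-agent lower bounds into a bound on the network regret. The network regret \eqref{eq:regr} is measured against a \emph{single} comparator $\inf_{\bx\in\cx}\sum_{t=1}^T\ell_t(\bx)$, whereas your $R^{(v)}$ is measured against the best point in hindsight for $v$'s own subsequence. Since $\sum_{v\in I}\inf_{\bx}\sum_{t\in T_v}\ell_t(\bx)\le\inf_{\bx}\sum_{t=1}^T\ell_t(\bx)$, one only gets $R_T\le\sum_{v}R^{(v)}$: the inequality points the wrong way, so the identity ``$R_T=\sum_{v\in I}R^{(v)}$'' is false and lower bounding each $R^{(v)}$ does not lower bound $R_T$. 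Your own concrete instance exhibits the failure: with $\cx=[-1,1]$ and $\ell_t(x)=\e_t x$ for i.i.d.\ Rademacher $\e_t$, every algorithm's expected loss is $0$ and $\E[R_T]=\E\big|\sum_{t=1}^T\e_t\big|=\Theta(\sqrt{T})$ regardless of $\alpha$, while $\sum_{v\in I}\E[R^{(v)}]=\Theta(\sqrt{\alpha T})$; the discrepancy is exactly the comparator mismatch, so with that loss distribution the claimed $\Omega(\sqrt{\alpha T})$ conclusion is simply not true.

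The fix is the construction you mention only parenthetically and never carry out: plant a \emph{common} bias so that a single comparator works for all agents. For instance, with two actions draw a good action $J$ uniformly at random and let the loss vectors be i.i.d.\ with bias $\e\asymp\sqrt{\alpha/T}$ in favour of $J$. Each agent of $I$ observes only about $T/\alpha\asymp 1/\e^2$ loss vectors, hence cannot identify $J$ reliably and suffers expected excess loss $\Omega(\e)$ per active round measured against the fixed comparator $J$; this is legitimate because regret against $J$ lower bounds regret against the best point in hindsight, and $J$ is the same for every agent. Summing over the $T$ active rounds gives $\Omega(\e T)=\Omega(\sqrt{\alpha T})$. (The paper's own argument is a sketch with the same high-level reduction, invoking the worst-case single-agent bound per agent; a fully rigorous version needs this common-comparator construction too. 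Your treatment of the random round counts $|T_v|$ via concentration is fine and slots into the corrected argument unchanged.)
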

\begin{proof}
We sketch the proof for the case $|S_t|=1$. Let $\cx$ be the probability simplex in $\R^d$. Let $G = (V,E)$ be any communication graph and $\alpha$ its independence number. We consider linear losses defined on $\cx$. Let $q$ be the uniform distribution over a maximal independent set $A=\{a_1,\l,a_\alpha\} \s V$. Fix now any cooperative online linear optimization algorithm for this setting. Since each active agent $v_t$ belongs to $A$ for all $t\in\{1,\l,T\}$ with probability $1$, it suffices to analyze the updates of the algorithm for these agents. Indeed, no other agent incurs any loss at any time-step. Since $A$ is an independent set, each agent $a_i$ makes an update at round $t$ if and only if $v_t = a_i$. This happens with probability $q(a_i)={1}/{\alpha}$, independently of $t$. Each agent $a_i$ is therefore running an independent single-agent online linear optimization problem for an average of ${T}/{\alpha}$ rounds. It is well-known \citep[Theorem~3.2]{hazan2016introduction} that any algorithm for online linear optimization on the simplex with losses bounded in $[0,1]$ incurs $\Omega\big(\sqrt{T/\alpha}\big)$ regret over $T/\alpha$ rounds in the worst case. Consequently, the regret of the network satisfies
$
	R_T
=
	\Omega\big(
		\alpha\sqrt{ T/\alpha}
	\big)
=
	\Omega\big(
		\sqrt{\alpha T }
	\big).
$
\end{proof}
An analogous lower bound can be proven for the case of multiple agent activations per time step. Indeed, define $q_v = 1/\alpha$ for each agent $v$ belonging to some fixed maximal independent set and $q_v=0$ otherwise. This again leads to $\alpha$ independent single-agent online linear optimization problems for an average of ${T}/{\alpha}$ rounds each, and an argument similar to the one in the proof of Theorem~\ref{t:lower} gives the result.


\section{Adversarial Activations}
\label{s:adv}
In this section we drop the stochasticity assumption on the agents' activations and focus on the case where active agents are picked from $V$ by an adversary. The goal is to control the regret \eqref{eq:regr} for \textsl{any} individual sequence of pairs $(\ell_1,S_1),(\ell_2,S_2),\l$ where $\ell_t$ is a convex loss and $S_t \subseteq V$, without any stochastic assumptions on the mechanism generating these pairs. For the rest of this section, we focus on the special case where $|S_t|=1$ for all $t$ and denote by $v_t$ the active node at time $t$.

We start by proving that learning with adversarial activations is impossible if we use an oblivious network interface. We prove this result in the setting of prediction with expert advice with two actions and binary losses, a special case of online convex optimization. The idea of the lower bound is that if the communication network is a star graph, the environment is able to make both actions look equally good to all peripheral agents, even if one of the two actions is actually better than the other. This is done by drawing the good action at random, then activating an agent depending on the outcome of the draw. For a small fraction of the times the good action has loss one, the central agent is activated. Since the central agent shares feedback with all peripheral agents, we can amplify this loss by a factor of $N$, and thus make the good action look to all peripheral agents as bad as the bad action.
\begin{theorem}
\label{th:lower-adv}
For each $N > 3$ there exists a convex decision set in $\R^2$ and a graph $G$ with $N$ vertices such that, whenever $N$ agents are run on $G$ using instances of any online learning algorithm with an oblivious network interface, then $R_T = \Omega(T)$ for some sequence $(\ell_1,v_1),\dots,(\ell_T,v_T)$ of convex losses and active agents.
\end{theorem}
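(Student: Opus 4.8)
The plan is to turn the star-graph sketch preceding the statement into a concrete \emph{randomized} environment, bound the expected network regret from below by $\Omega(T)$, and then extract a deterministic bad instance by averaging. \textbf{The construction I would use:} let $G$ be the star with center $c$ and leaves $p_1,\dots,p_{N-1}$ (so $|V|=N$), let $\cx=\{\bx\in\R^2:x_1,x_2\ge 0,\ x_1+x_2=1\}$ be the two-point simplex, and identify each loss with its coefficient vector, so $\ell_t\in\{e_1,e_2\}$ with $e_1=(1,0)$, $e_2=(0,1)$ and $\ell_t(\bx)=\ell_t\cdot\bx$; this is two-expert prediction, a special case of online convex optimization, and $|S_t|=1$ throughout. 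The environment draws a ``good action'' $a$ uniformly in $\{1,2\}$, sets $b=3-a$, and at each round $t$ independently draws the active agent $U_t$ uniformly from $V$, setting $\ell_t=e_a$ if $U_t=c$ (a \emph{central} round) and $\ell_t=e_b$ if $U_t$ is a leaf (a \emph{leaf} round). So in expectation a $1/N$ fraction of the rounds are central, with the loss pointing at $a$, and the remaining $(N-1)/N$ fraction are leaf rounds, with the loss pointing at $b$ --- exactly the ``amplify the rare loss of the good action by the star's degree'' idea. Moreover $a$ is the unique best action: it is charged only on central rounds, so $\inf_{\bx\in\cx}\sum_t\ell_t(\bx)\le\sum_t\ell_t(e_a)=\#\{t:U_t=c\}$, which has expectation $T/N$.

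\textbf{The key step.} I would show that on every leaf round the expected loss charged to the system is exactly $\tfrac12$, regardless of the agents' algorithm. Fix a leaf $p_i$ and a round $t$. Its closed neighborhood is $\{p_i,c\}$, so $A_{p_i}$ receives feedback precisely on the rounds $s$ with $U_s\in\{p_i,c\}$, and --- this is where the oblivious interface enters --- the prediction $\bx_t(p_i)$ depends only on the sequence $h$ of loss vectors $p_i$ has received through round $t-1$ (together with $p_i$'s environment-independent randomness; any dependence on the round index is harmless, being independent of $a$). I claim $a$ is still uniform given $h$. The argument is an involution: the map on $(a,U_1,\dots,U_{t-1})$ that swaps $a\leftrightarrow b$ and swaps the labels $c\leftrightarrow p_i$ in the schedule (fixing every other $U_s$) is measure preserving, and it leaves $h$ unchanged --- a central round with feedback $e_a$ is sent to a $p_i$-active round whose loss, $b$ being good there, is $e_a$, still the feedback seen; symmetrically a $p_i$-active round with feedback $e_b$ is sent to a central round with loss $e_b$; and rounds with $U_s=p_j$, $j\ne i$, give $p_i$ no feedback either way. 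Hence $\P(a=1\mid h)=\tfrac12$, so $\E[e_b\mid h]=\tfrac12(e_1+e_2)$, and since $\bx_t(p_i)\in\cx$ depends on $(a,U_1,\dots,U_{t-1})$ only through $h$ while $\{U_t=p_i\}$ is independent of those variables,
\[
	\E\big[\ell_t(\bx_t(p_i))\,\I\{U_t=p_i\}\big]
=
	\P(U_t=p_i)\cdot\E\big[\bx_t(p_i)\cdot e_b\big]
=
	\frac1N\cdot\E\!\left[\bx_t(p_i)\cdot\tfrac12(e_1+e_2)\right]
=
	\frac1{2N}.
\]

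\textbf{Putting it together.} Summing over the $N-1$ leaves and the $T$ rounds and discarding the nonnegative central-round contribution to the system loss,
\[
	\E\!\left[\sum_{t=1}^T\frac{1}{|S_t|}\sum_{v\in S_t}\ell_t(\bx_t(v))\right]
\ \ge\
	\sum_{t=1}^T\sum_{i=1}^{N-1}\E\big[\ell_t(\bx_t(p_i))\,\I\{U_t=p_i\}\big]
=
	\frac{(N-1)T}{2N},
\]
while $\E\big[\inf_{\bx\in\cx}\sum_t\ell_t(\bx)\big]\le\E\big[\sum_t\ell_t(e_a)\big]=T/N$. Subtracting, $\E[R_T]\ge\frac{(N-1)T}{2N}-\frac{T}{N}=\frac{(N-3)T}{2N}$, which is $\Omega(T)$ for each fixed $N>3$. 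Since this is an expectation over the environment's (and the agents') internal randomness, some realization of $(\ell_1,v_1),\dots,(\ell_T,v_T)$ attains $R_T=\Omega(T)$.

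\textbf{Where I expect the difficulty.} The whole argument rests on the symmetry/involution step --- a leaf must be unable to extract any information about $a$ from its feedback. The point requiring care is that this needs \emph{both} the randomization of which leaf is activated on each leaf round (so that ``central'' and ``own'' feedbacks are statistically exchangeable at a leaf) \emph{and} the oblivious interface (so that a leaf cannot separate the two by knowing when it is active). Drop either and a leaf can recover the vector $e_a$, hence $a$, play it, and contribute no regret --- which is precisely why the complementary $\scO(\sqrt{\chibar T})$ upper bound must abandon the oblivious interface. A routine secondary check is that the uniform activation probabilities $1/N$ are exactly what equalizes the two feedback types at each leaf while keeping the good action's cumulative loss at $\Theta(T/N)$.
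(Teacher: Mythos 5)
Your proposal is correct and takes essentially the same route as the paper: a star graph, a hidden uniformly random good action, and an activation schedule that routes the good action's rare losses through the center so that, under the oblivious interface, each peripheral agent's feedback is symmetric in the two actions, yielding expected per-round loss $\nicefrac{1}{2}$ on leaf rounds while the comparator pays only on the $\Theta(T/N)$ central rounds. Your version simplifies the loss distribution (no $(0,0)$ losses, no $\varepsilon$ parameter, uniform activations) and replaces the paper's check of one-round feedback probabilities with an explicit measure-preserving involution swapping the good/bad actions and the center/leaf labels, which is a cleaner and more rigorous justification of the key indistinguishability step; the resulting constant $(N-3)/(2N)\ge 1/8$ matches the paper's.
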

\begin{proof}
Fix $N > 3$ and let $\cx$ be the probability simplex in $\R^2$. Let $G = (V,E)$ be the star graph with central agent $a_0$, and peripheral agents $a_1, \l, a_{N-1}$. Because our losses are linear on $\cx$, the online convex optimization problem is equivalent to prediction with expert advice with two experts (or actions), and we may denote losses using loss vectors $\ell_t = \big(\ell_t(1),\ell_t(2)\big)$ where $1$ and $2$ index the actions. A \textsl{good action} $J \in \{1,2\}$ is drawn uniformly at random. Denote the other one (i.e., the \textsl{bad} one) by $\bad$. To keep notation tidy, we define loss vectors by $\ell_t = \big(\ell_t(J),\ell_t(\bad)\big)$. Fix any $\e \in \big(0, \frac{N-1}{2(N-2)}\big)$. The loss vectors $\ell_t$ are drawn i.i.d.\ at random, according to the following joint distribution:
\begin{align*}
	\P\big( \ell_t = (0,1) \big) 
	= \frac{1}{2}
\qquad
	\P\big( \ell_t = (1,0) \big)
	= \frac{1}{2} - \e + \frac{\e}{N-1}
\qquad
	\P\big( \ell_t = (0,0) \big)
	= \e - \frac{\e}{N-1}
\end{align*}
Recall that only a single agent $v_t$ is active at any time. At each time step $t$, the adversary decides whether to activate the central agent $a_0$ or a peripheral agent, depending on the realization of $\ell_t$. If $\ell_t(J)=0$, then a random peripheral agent is activated. Otherwise, we set
\begin{align*}
	\P\big( \ell_t = (1,0), \, v_t = a_0 \big) = \frac{\e}{N-1}
\quad\text{and}\quad
	\P\big( \ell_t = (1,0), \, v_t = a_i \big) = \frac{\nicefrac{1}{2} - \e}{N-1} \quad \text{$a_1,\dots,a_{N-1}$}
\end{align*}
Note that when $v_t = a_0$, then all peripheral agents receive feedback $\ell_t$. Similarly, when a peripheral agent is active at time $t$, then $a_0$ receives feedback $\ell_t$. For $b_1,b_2\in\bool$, let $E(a_i,b_1,b_2)$ be the event: agent $a_i$ receives the loss vector $\ell_t = (b_1,b_2)$ as feedback. The following statements then hold for each peripheral agent $a_i$,
\begin{align*}
	\Pr\big(E(a_i,0,1) \big) &= \frac{\nicefrac{1}{2}}{N-1}
\qquad
	\Pr\big(E(a_i,0,0) \big) = \frac{\e}{N-1} - \frac{\e}{(N-1)^2}
\\
	\Pr\big(E(a_i,1,0) \big) &= \frac{\nicefrac{1}{2} - \e}{N-1} + \frac{\e}{N-1} = \frac{\nicefrac{1}{2}}{N-1}
\end{align*}
Hence, each instance managed by a peripheral agent observes loss vectors $(1,0)$ and $(0,1)$ with the same probability proportional to $\nicefrac{1}{2}$, and loss vector $(0,0)$ with probability proportional to $\e(N-1)/(N-2)$. Since the network interface is oblivious, the instance cannot distinguish between paid and free feedback (which would reveal the good action), and incurs an expected loss of $\nicefrac{1}{2}$ each time $\ell_t \in \big\{(0,1), (1,0)\big\}$.
Using the fact that a peripheral agent is active when $\ell_t \in \big\{(0,1), (1,0)\big\}$ with probability $\nicefrac{1}{2} + \nicefrac{1}{2} - \e = 1 - \e$, the system's expected total loss is at least $\frac{1-\e}{2}T$ (we lower bound the loss of the central agent by zero). Since the expected loss of $J$ is $\big( \nicefrac{1}{2} - \e + \frac{\e}{N-1} \big) T$, the expected regret of the system satisfies
\[
	\E[R_T]
\ge
	\left( \frac{1-\e}{2} - \frac{1}{2} + \e - \frac{\e}{N-1} \right) T
\ge
	\frac{T}{8}
\]
where we picked $\e = (N-1)/(N-2)$ and used $(N-3)/(N-2)\ge \nicefrac{1}{2}$ in the last inequality. Therefore, there exists some sequence $(\ell_1,S_1),\dots,(\ell_T,S_T)$ such that $R_T \ge T/8$, concluding the proof.
\end{proof}
We complement the above negative result by showing that when algorithms are run without the oblivious network interface, and agents are free to use feedback only from a subset of their neighbors chosen with knowledge of the graph structure, then the network regret of OMD is $\scO(\sqrt{\chibar_G T})$. The quantity $\chibar_G$ is the clique-covering number of the communication graph $G$, which corresponds to the smallest cardinality of a clique cover of $G$ (a clique cover is a partition of the vertices such that the nodes in every element of the partition form a clique in the graph). The intuition behind this result is simple: fix a clique cover and let the agents in the same clique of the cover know each other. Now, if each agent ignores all feedback coming from agents in other cliques, then the agents in the same clique make exactly the same sequence of prediction and updates. Therefore, the effective number of OMD instances that are being run is equal to $\chibar_G$.

The following result holds for any differentiable function $g\colon \cx\to\R$, $\sigma$-strongly convex with respect to some norm $\norm{\cdot}$. 
\begin{theorem}
\label{t:upper-adv}
Consider a network $G = (V,E)$ of $N$ agents, a clique cover $\{K_1, \dots, K_M\}$ where $M = \chibar_G$, and let $K(v)$ be the unique element of the cover which each $v\in V$ belongs to. For any sequence $v_1,v_2,\ldots\in V$ of active agents, assume each agent $v \in V$ runs OMD using $g_t = \frac{\sqrt{t}}{\eta}g$ (with $\eta > 0$) while making updates only at those time steps $t$ such that $v_t \in K(v)$. Then the network regret satisfies
\[
	\E[R_T]
\le
	\left( \frac{D}{\eta}
	+\frac{\eta L^2}{2 \sigma} \right) \sqrt{\chibar_G T}
\]
where $D \ge \sup g$, $L \ge \sup \norm{\nabla\ell{}_{t}}_{*}$, and $\norm{\cdot}_{*}$ is the dual norm of $\norm{\cdot}$. In particular, choosing $\eta = \sqrt{2 \sigma D}/L$ gives
$
	\E[R_T] \le L\sqrt{2D \chibar_G T / \sigma}
$.
\end{theorem}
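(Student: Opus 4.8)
The plan is to reduce the multi-agent problem with a clique cover to $\chibar_G$ independent single-agent OMD instances, and then apply Theorem~\ref{t:omd} to each one together with a Cauchy--Schwarz/Jensen step. First I would observe that, since each agent $v$ makes an update exactly at those time steps $t$ for which $v_t \in K(v)$, all agents belonging to the same clique $K_j$ of the cover receive the same sequence of feedback (because whenever some $v_s \in K_j$ is active, every other $w \in K_j$ is a neighbor of $v_s$ — this uses precisely the fact that $K_j$ is a clique — and hence receives $\ell_s$ as free feedback, and $w$ chooses to process it). Since all instances share the same initialization and learning-rate schedule $g_t = \frac{\sqrt t}{\eta} g$, they produce an identical sequence of predictions; denote by $\bx_t^{(j)}$ the common prediction of clique $K_j$ at time $t$. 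Thus the per-round network loss $\ell_t(\bx_t(v_t))$ equals $\ell_t(\bx_t^{(j)})$ where $j$ is the index with $v_t \in K_j$, and the network regret decomposes across cliques.

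Next I would fix a comparator $\bx \in \cx$ and, for each clique index $j$, let $T_j = \sum_{t=1}^T \I\{v_t \in K_j\}$ be the number of rounds in which the instance of clique $K_j$ is updated. Applying Theorem~\ref{t:omd} to the (effective) single instance of clique $K_j$, run over exactly those $T_j$ active rounds — mimicking the telescoping/step-size bookkeeping in display~\eqref{e:base} of the proof of Theorem~\ref{t:upper-coop}, in particular $\sum_{s=1}^{T_j} s^{-1/2} \le 2\sqrt{T_j}$ — gives
\[
	\sum_{t=1}^T \big(\ell_t(\bx_t^{(j)}) - \ell_t(\bx)\big)\I\{v_t\in K_j\}
\le
	\left(\frac{D}{\eta} + \frac{\eta L^2}{2\sigma}\right)\sqrt{T_j}.
\]
Summing over $j\in\{1,\dots,M\}$ and using the fact that $\{K_1,\dots,K_M\}$ partitions $V$ (so $\sum_j T_j = T$), followed by Cauchy--Schwarz, $\sum_{j=1}^M \sqrt{T_j} \le \sqrt{M\sum_{j=1}^M T_j} = \sqrt{MT} = \sqrt{\chibar_G T}$, yields the claimed bound for $R_T(\bx)$; taking the supremum over $\bx\in\cx$ finishes it. Note the expectation in the statement is vacuous here since nothing is random — the adversarial activations are a fixed individual sequence — so $\E[R_T] = R_T$; one may keep the $\E$ for uniformity with the other theorems.

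The only genuinely non-routine point is the first step: verifying carefully that the in-clique instances are \emph{exactly} synchronized. This requires checking that (i) whenever $v_t$ is active, every $w$ with $w \in K(v_t)$, $w \neq v_t$, indeed has $v_t \in \cn_w$, which holds because $K(v_t)$ is a clique and hence any two distinct members are adjacent, so $w$ has the \emph{option} to process $\ell_t$; (ii) the algorithm as specified does process it, since the update rule says $w$ updates at time $t$ iff $v_t \in K(w) = K(v_t)$; and (iii) the base case and the learning-rate schedule depend only on the \emph{count} of past updates (through $g_t = \frac{\sqrt t}{\eta} g$ — note here $t$ must be read as the local update counter for the reduction to go through, exactly as in \eqref{e:base} where the denominator is $\sqrt{\sum_{s\le t}\I\{\cdot\}}$ rather than $\sqrt t$), which is identical across $K_j$. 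Everything after that is the same Cauchy--Schwarz/Jensen calculation already used in the proof of Theorem~\ref{t:upper-coop}, with the combinatorial Lemma~\ref{lm:bound-alpha} replaced by the trivial partition identity $\sum_j T_j = T$, which is why $\chibar_G$ rather than $\alpha_G$ appears.
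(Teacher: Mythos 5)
Your proposal is correct and follows essentially the same route as the paper's own proof: agents within a clique are exactly synchronized (since any two clique members are adjacent and each processes feedback precisely when the active agent lies in its clique), so the problem reduces to $\chibar_G$ effective OMD instances, each bounded via Theorem~\ref{t:omd} over its $T_j$ update rounds, and the final step $\sum_j \sqrt{T_j} \le \sqrt{\chibar_G T}$ is the paper's Jensen step written as Cauchy--Schwarz. Your added remarks (the regularizer index being the local update counter, and the expectation being vacuous for a fixed adversarial sequence) are accurate clarifications, not deviations.
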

\begin{proof}
Fix any clique $K_c$ and any $v \in K_c$. Let $T_c$ be the time steps such that $v_t \in K_c$. Since each agent $v \in K_c$ ignores the feedback coming from other cliques, the nodes in $K_c$ perform exactly the same updates, and therefore make exactly the same predictions. This means that, for any $t \in T_c$, the predictions in the set $\big\{\bx_t(v) \mid v \in K_c\big\}$ are all equal to the same common value denoted by $\bx_t(K_c)$. Fix any $\bx \in \cx$ and, for any $t\in T_c$, let
$
	r_t(K_c)
=
	\ell_t\big(\bx_t(K_c)\big) - \ell_t (\bx)
$.
By Theorem~\ref{t:omd} we have that
\[
	\sum_{t \in T_c} r_t(K_c)
\le
	\left(\frac{D}{\eta} + \frac{\eta L^2}{2\sigma}\right)\sqrt{T_c}~.
\]
Therefore, recalling that $r_t(v_t) = \ell_t\big(\bx_t(v_t)\big) - \ell_t (\bx)$ and using Jensen's inequality,
\[
	\sum_{t=1}^T r_t(v_t)
=
	\sum_{c=1}^{\chibar_G} \sum_{t \in T_c} r_t(K_c)
\le
	\sum_{c=1}^{\chibar_G} \left(\frac{D}{\eta} + \frac{\eta L^2}{2\sigma}\right)\sqrt{T_c}
\le
	\left(\frac{D}{\eta} + \frac{\eta L^2}{2\sigma}\right)\sqrt{\chibar_G T}
\]
concluding the proof.
\end{proof}
Theorems~\ref{th:lower-adv} and~\ref{t:upper-adv} show that with adversarial activations the knowledge of the graph is crucial for learning (e.g., for achieving sublinear regret). Since $\chibar_G \ge \alpha_G$, it is not clear whether the better rate $\sqrt{\alpha_G T}$ can be proven in the adversarial activation setting when agents do not use the oblivious network interface.


\section{Conclusions}
\label{s:conclusions}
We introduced a cooperative learning setting in which agents, sitting on the nodes of a communication network, run instances of an online learning algorithm with the common goal of minimizing their regret. In order to investigate how the knowledge of the graph topology affects regret in cooperative online learning under different activation mechanisms, we introduced the notion of oblivious network interface. This prevents agents from doing any network-specific tuning or even accessing their neighborhood structure. When activations are stochastic, we showed that sharing losses among neighbors is enough to guarantee optimal regret rates even with the oblivious network interface. Surprisingly, when activations are adversarial the situation changes completely. There exist problem instances in which any algorithm that runs with the oblivious network interface suffers linear regret. In this case knowing graph structure is not only necessary to perform optimally, but even to have sublinear regret.

Other interesting variants of this settings could be studied in the future. For example, at the beginning of each round, active agents could be allowed to ask the predictions of some of their neighbors, and base their prediction upon it. In this case, we conjecture that the optimal regret rate would scale with the dominating number $\delta_G$ of the graph, which is always smaller or equal to the independence number.

\section*{Acknowledgments}
Nicolò Cesa-Bianchi and Tommaso Cesari gratefully acknowledge partial support by the GAMENET COST Action (CA 16228) and by the MIUR PRIN grant \textsl{Algorithms, Games, and Digital Markets} (ALGADIMAR). Nicolò Cesa-Bianchi is also partially supported by the Google Focused Award \textsl{Algorithms and Learning for AI} (ALL4AI).

\bibliography{tc,ncb,cm}

\appendix


\section{Stochastic Activations: Multiple Agents}
\label{s:many}
In this section we present all missing results related to the stochastic activation model with multiple activations per time step. Recall that, at the beginning of the process, the environment draws an i.i.d.\ sequence of Bernoulli random variables $X_1(v), X_2(v),\l$ with some unknown fixed parameter $q_v \in[0,1]$ for each agent $v\in V$. The active set at time $t$ is then defined as $S_t = \{v \in V \mid X_t(v) = 1 \}$. Note that, unlike when only one agent is active at each time step, now $\sum_{v\in V} q_v \neq 1$ in general. Before the main result, we give some definitions and prove a technical combinatorial lemma that is leveraged in the analysis.

Denote by $V'$ the set of all agents $v\in V$ such that $q_v>0$. For each $v \in V'$, let
\begin{equation}
\label{e:c_v}
	c_v 
=
	\sum_{S\s \{1,\l,N\}\m\{v\}} \frac{\lambda_{S,v}}{1+|S|}
\end{equation}
where the convex coefficients $\lambda_{S,v}$ are defined by
\[
	\left(
		\prod_{w=1}^N q_w
	\right)
	\left(
		\prod_{u\in \{1,\l,N\} \m ( \{v\} \cup S )} (1-q_u)
	\right)
\]
Let also $Q_v$ be the probability
\begin{equation}
\label{e:Q_v-many}
	\Pr\left( v \in \bigcup_{w\in S_t} \cn_{w} \right) 
= 
	1 - \prod_{w \in \cn_v} \big( 1 - q_w \big)
>
	0
\end{equation}
that agent $v$ is updated at time $t$ ---note that $Q_v$ is independent of $t$. 
\begin{lemma}
\label{lm:comb-tech}
Let $X(1), \l, X(m)$ be independent Bernoulli random variables with strictly positive parameters $q_1, \l, q_m$ respectively. Then, for all $v\in\{1,\l,m\}$,
\[
	\E\left[
		\frac{X(v)}
		{\sum_{w=1}^m X(w)}
	\right]
=
	q_v c_v
\]
where we define $X(v)/\sum_{w=1}^m X(w) = 0$ when $X(v) = 0$.
\end{lemma}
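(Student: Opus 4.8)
\noindent The plan is to compute the expectation directly, by conditioning on the value of $X(v)$ and then expanding over the configuration of the remaining variables.

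First I would split according to whether $X(v) = 0$ or $X(v) = 1$. By the stated convention, the quantity $X(v)/\sum_{w=1}^m X(w)$ equals $0$ on the event $\{X(v) = 0\}$ (this covers both the $0/0$ case, when every $X(w)$ vanishes, and the genuine $0/(\text{positive})$ case), so only the event $\{X(v) = 1\}$, of probability $q_v$, contributes. Moreover, on $\{X(v) = 1\}$ the denominator is at least $1$, so there is no division by zero and the quantity equals $1/\bigl(1 + \sum_{w \neq v} X(w)\bigr)$. Hence, by the law of total expectation,
\[
	\E\!\left[\frac{X(v)}{\sum_{w=1}^m X(w)}\right]
=
	q_v\,\E\!\left[\frac{1}{1 + \sum_{w \neq v} X(w)} \,\middle|\, X(v) = 1\right],
\]
and since the $X(w)$ are independent, conditioning on $X(v) = 1$ leaves the joint law of $(X(w))_{w\neq v}$ unchanged, so the conditional expectation coincides with the unconditional one $\E\bigl[1/(1 + \sum_{w\neq v} X(w))\bigr]$.

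Next I would evaluate this last expectation by enumerating the possible realizations of the set of ``active'' indices among $w \neq v$. For $S \ss \{1,\l,m\}\m\{v\}$, the event $\{w \neq v : X(w) = 1\} = S$ has probability $\prod_{w \in S} q_w \prod_{u \in \{1,\l,m\}\m(\{v\}\cup S)} (1 - q_u)$, which is exactly the coefficient $\lambda_{S,v}$ from \eqref{e:c_v} (read with the product $\prod_{w\in S} q_w$ ranging over the active indices), and on this event $\sum_{w\neq v} X(w) = |S|$. Summing over all such $S$ therefore gives $\E\bigl[1/(1+\sum_{w\neq v}X(w))\bigr] = \sum_{S} \lambda_{S,v}/(1+|S|) = c_v$, and combining with the previous display yields the claimed identity $q_v c_v$. (When the lemma is applied one takes $m = N$.)

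The argument is elementary; the only points that need care are (i) invoking the degenerate convention $X(v)/\sum_w X(w) = 0$ on $\{X(v) = 0\}$, both to discard that branch and to guarantee a nonzero denominator on the surviving branch, and (ii) recognizing the configuration probabilities as the coefficients $\lambda_{S,v}$ --- in particular that $\sum_S \lambda_{S,v} = 1$, so that $c_v$ is genuinely a convex combination of the values $1/(1+|S|)$. I do not anticipate any real obstacle beyond this bookkeeping.
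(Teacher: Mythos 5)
Your proof is correct, and it diverges from the paper's argument in the key computational step. The reduction to $q_v\,\E\big[1/(1+\sum_{w\neq v}X(w))\big]$ is the same in both (the paper conditions on the $\sigma$-algebra generated by $\{X(w)\}_{w\neq v}$; you condition on $X(v)$ and invoke independence --- equivalent bookkeeping). But to evaluate that expectation the paper uses the integral representation $1/x=\int_0^\infty e^{-tx}\,\dif t$, Fubini, the Bernoulli moment generating functions, and the change of variables $x=e^{-t}$, arriving at $c_v=\int_0^1\prod_{w\neq v}(q_w x+1-q_w)\,\dif x$ before expanding the polynomial to get $\sum_S \lambda_{S,v}/(1+|S|)$; you instead enumerate directly over the possible active sets $S\ss\{1,\l,m\}\m\{v\}$, which is more elementary and skips Fubini entirely. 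What you lose is the intermediate integral formula, which is not idle in the paper: it is what the main text invokes (``proceed as in the proof of Lemma~\ref{lm:comb-tech}'') to derive the closed form $Q(q)=\frac{1}{N}\sum_{v}\frac{1-(1-q)^N}{1-(1-q)^{|\cn_v|}}$ in the uniform case, via $c_v=\int_0^1(qx+1-q)^{N-1}\dif x=\frac{1-(1-q)^N}{qN}$; with your expansion one would have to re-sum the binomial series to recover it. Two minor points you handled well and should keep explicit: the displayed definition of $\lambda_{S,v}$ after \eqref{e:c_v} has a typo ($\prod_{w=1}^N q_w$ should be $\prod_{w\in S}q_w$, as the paper's own proof later confirms), and your reading is the intended one; and your observation that $\sum_S\lambda_{S,v}=\prod_{w\neq v}(q_w+1-q_w)=1$ is exactly what justifies calling $c_v$ a convex combination.
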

\begin{proof}
Fix any $v\in \{1,\l,m\}$. Let $S_v$ be the set $\{1,\l,m\}\m\{v\}$ and let  $\cf_v$ be the $\sigma$-algebra generated by 
$
	\big\{X(w) \mid w \in S_v\big\}
$.  
Then
\begin{align*}
	\E \left[
		\frac{X(v)}{\sum_{w=1}^m X(w)}
	\right]
=
	\E \left[
		\E \left[
			\frac{X(v)}{\sum_{w=1}^m X(w)}
			\bigg| \cf_v
		\right]
	\right]
=
	q_v \,
	\E \left[
		\frac{1}{1+ \sum_{w\in S_v} X(w)}
	\right]
\end{align*}
Denote the last expectation by $c_v$. Since for all $x\neq 0$, 
$
	\int_0^{\iop} e^{-t x} \dif t 
= 
	\frac{1}{x}
$,
Fubini's theorem yields
\begin{align*}
	c_v
& = 
	\int_0^{\iop}	
	\E \left[
		e^{-t \big(1 + \sum_{w \in S_v} X(w)\big)}
	\right]
	\dif t
\\ & =
	\int_0^{\iop}	
		e^{-t} \prod_{w \in S_v} \E\left[e^{-t X_t(w)}\right]
	\dif t
\\ & =
	\int_0^{\iop}	
		e^{-t}\prod_{w \in S_v} \big(q_w e^{-t}+1-q_w\big)
	\dif t
\\ & =	
	\int_0^1	
		\prod_{w \in S_v} (q_w x+1-q_w)
	\dif x
\\ & =
	\int_{0}^{1}
	\sum_{S \s S_v }
		x^{\left|S\right|}
		\left(
			\prod_{w\in S} q_w
		\right)
		\left(
			\prod_{u\in S_v \m S} (1-q_u)
		\right)
	\dif x
\end{align*}
Now set
$
	\lambda_{S,v}
=
	\big( \prod_{w\in S} q_w \big)
	\big( \prod_{S_v \m S} (1-q_u) \big)
$ 
and note that $\sum_{S\s S_v} \lambda_{S,v} = \prod_{w\in S_v} ( q_w + 1 - q_w ) = 1$. Substituting $\lambda_{S,v}$ in the last identity gives
\[
	c_v
= 
	\sum_{S \s S_v }
	\lambda_{S,v}
	\int_{0}^{1}
		x^{\left|S\right|}
	\dif x
=
	\sum_{S \s S_v }
	\frac{\lambda_{S,v}}{1+|S|}
\]
\end{proof}
We now give an upper bound on the regret that the network incurs if all agents run OMD with an oblivious network interface. Our upper bound is expressed in terms of a constant depending on the probabilities of activating each agent and such that $Q \le 1.6(\alpha_G+1)$. The result holds for any differentiable function $g\colon \cx\to\R$, $\sigma$-strongly convex with respect to some norm $\norm{\cdot}$. 
\begin{theorem}
\label{t:upper-coop-many-appe}
Consider a network $G = (V,E)$ of $N$ agents. Assume that, at each time step $t$ each agent $v$ is independently activated with probability $q_v \in [0,1]$. If all agents run OMD with an oblivious network interface and using $g_t = \frac{\sqrt{t}}{\eta}g$, for $\eta > 0$, the network regret satisfies
\[
	\E[R_T]
\le
	\left( \frac{D}{\eta}
	+\frac{\eta L^2}{2 \sigma} \right) \sqrt{Q T}
\]
where
$
	Q
=
	\sum_{v\in V'}
	(q_v c_v)/Q_v 
$,
$D \ge \sup g$, $L \ge \sup \norm{\nabla\ell{}_{t}}_{*}$, and $\norm{\cdot}_{*}$ is the dual norm of $\norm{\cdot}$. In particular, choosing $\eta = \sqrt{2 \sigma D}/L$ gives
$
	\E[R_T] \le L\sqrt{2D Q T / \sigma}
$.
\end{theorem}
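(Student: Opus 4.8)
The plan is to mimic the proof of Theorem~\ref{t:upper-coop} almost verbatim; the only new feature is the averaging factor $1/|S_t|$ in the network regret \eqref{eq:regr}, which I would handle through the combinatorial identity of Lemma~\ref{lm:comb-tech}. First I would fix $\bx\in\cx$ and a realization of the activations $\{X_t(v)\}$, write $r_t(v)=\ell_t(\bx_t(v))-\ell_t(\bx)$, and note that the OMD instance run by an agent $v\in V'$ updates at time $t$ precisely when $v\in\bigcup_{w\in S_t}\cn_w$. Applying Theorem~\ref{t:omd} to that subsequence of rounds (and using $\sum_{s\le T_v}s^{-1/2}\le 2\sqrt{T_v}$, exactly as in \eqref{e:base}) gives $\sum_{t=1}^T r_t(v)\,\I\{v\in\bigcup_{w\in S_t}\cn_w\}\le\big(\tfrac{D}{\eta}+\tfrac{\eta L^2}{2\sigma}\big)\sqrt{T_v}$, where $T_v=\sum_{t=1}^T\I\{v\in\bigcup_{w\in S_t}\cn_w\}$.

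Next I would pass to expectations. As in Theorem~\ref{t:upper-coop}, $r_t(v)$ depends only on the activations strictly before time $t$ (the losses $\ell_t$ being fixed in advance), hence it is measurable with respect to the $\sigma$-algebra $\cf_{t-1}$ generated by the activations up to time $t-1$ and $S_t$ is independent of $\cf_{t-1}$; therefore $\E\big[r_t(v)\,\I\{v\in\bigcup_{w\in S_t}\cn_w\}\big]=Q_v\,\E[r_t(v)]$ with $Q_v$ the update probability \eqref{e:Q_v-many}. Taking expectations above, using $\E[T_v]=Q_v T$, Jensen's inequality, and dividing by $Q_v>0$ yields $\E\big[\sum_{t=1}^T r_t(v)\big]\le\big(\tfrac{D}{\eta}+\tfrac{\eta L^2}{2\sigma}\big)\sqrt{T/Q_v}$. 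Then I would unfold the regret as $\E[R_T(\bx)]=\E\big[\sum_{t=1}^T\sum_{v\in V'}r_t(v)\,\tfrac{\I\{v\in S_t\}}{|S_t|}\big]$; conditioning on $\cf_{t-1}$ (pulling out the $\cf_{t-1}$-measurable $r_t(v)$ and using independence of $S_t$) together with Lemma~\ref{lm:comb-tech} gives $\E\big[\tfrac{\I\{v\in S_t\}}{|S_t|}\,\big|\,\cf_{t-1}\big]=\E\big[\tfrac{X_t(v)}{\sum_w X_t(w)}\big]=q_v c_v$, where the agents $w$ with $q_w=0$ never affect $|S_t|$ and the lemma applies with the strictly positive parameters $\{q_w\}_{w\in V'}$. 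Hence $\E[R_T(\bx)]=\sum_{v\in V'}q_v c_v\,\E\big[\sum_{t=1}^T r_t(v)\big]\le\big(\tfrac{D}{\eta}+\tfrac{\eta L^2}{2\sigma}\big)\sqrt{T}\sum_{v\in V'}\tfrac{q_v c_v}{\sqrt{Q_v}}$, all coefficients being nonnegative.

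Finally I would close the estimate: by Cauchy--Schwarz, $\sum_{v\in V'}\tfrac{q_v c_v}{\sqrt{Q_v}}\le\big(\sum_{v\in V'}q_v c_v\big)^{1/2}\big(\sum_{v\in V'}\tfrac{q_v c_v}{Q_v}\big)^{1/2}=\big(\sum_{v\in V'}q_v c_v\big)^{1/2}\sqrt{Q}$, and $\sum_{v\in V'}q_v c_v=\E\big[\sum_{v}\tfrac{X_t(v)}{\sum_w X_t(w)}\big]=\P(S_t\neq\varnothing)\le 1$, so the sum is at most $\sqrt{Q}$. This gives $\E[R_T(\bx)]\le\big(\tfrac{D}{\eta}+\tfrac{\eta L^2}{2\sigma}\big)\sqrt{QT}$; since the $\ell_t$ are fixed the comparator $\inf_{\bx}\sum_t\ell_t(\bx)$ is deterministic, so $\E[R_T]=\sup_{\bx\in\cx}\E[R_T(\bx)]$ and the first claim follows, with $\eta=\sqrt{2\sigma D}/L$ giving the second. \textbf{The part I expect to need the most care} is the treatment of the $1/|S_t|$ factor, i.e.\ identifying $\E[\I\{v\in S_t\}/|S_t|]$ with $q_v c_v$ via Lemma~\ref{lm:comb-tech} and exploiting the normalization $\sum_{v\in V'}q_v c_v\le 1$ in the Cauchy--Schwarz step; the companion bound $Q\le 1.6(\alpha_G+1)$ (cf.\ Lemma~\ref{l:bandCoop}) is a separate graph-theoretic fact not needed for this theorem.
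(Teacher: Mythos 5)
Your proposal is correct and follows essentially the same route as the paper's proof: a per-agent application of Theorem~\ref{t:omd} over the update rounds giving $\E[\sum_t r_t(v)]\le(\frac{D}{\eta}+\frac{\eta L^2}{2\sigma})\sqrt{T/Q_v}$, the identity $\E[X_t(v)/\sum_w X_t(w)]=q_vc_v$ from Lemma~\ref{lm:comb-tech} to handle the $1/|S_t|$ factor, and a final aggregation using $\sum_{v\in V'}q_vc_v\le1$ (your Cauchy--Schwarz step is equivalent to the paper's Jensen step). Your explicit observation that $\sum_{v\in V'}q_vc_v=\P(S_t\neq\varnothing)\le1$ is a nice justification of a fact the paper uses without comment.
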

\begin{proof}
Fixing an arbitrary $\bx\in \cx$, setting 
$
	r_t(v)
=
	\ell_t\big(\bx_t(v)\big)
	- \ell_t (\bx)
$, 
and proceeding as in Theorem~2 yields, for each $v\in V'$,
\begin{equation}
\label{e:fdm-set}
	\E \left[ \sum_{t=1}^{T} 
		r_t(v)	
	\right]
\le
	\left(\frac{D}{\eta} + \frac{\eta L^2}{2\sigma} \right)
	\sqrt{\frac{T}{Q_v}}
\end{equation}
Now we write ${\dt \E[R_T] = \sup_{\bx\in\cx} \E\big[R_T(\bx)\big] }$, where
\begin{align}
	\E\big[R_T (\bx)\big]
& =
	\E \left[
		\sum_{t=1}^{T}
		\frac{1}{\sum_{w\in V} X_t(w)}
		\sum_{v\in V'}
		r_{t}(v) 
		X_t(v) 
	\right]
\nonumber \\ & =
	\sum_{t=1}^{T}
	\sum_{v\in V'}
	\E \left[
		\frac{X_t(v)}{\sum_{w\in V} X_t(w)}
	\right]
	\E \big[
		r_{t} (v) 
	\big]
\nonumber \\ & =
	\label{e:regr-part}
	\sum_{v\in V'}
	q_v c_v
	\sum_{t=1}^{T}
	\E \big[
		r_{t} (v) 
	\big]
\end{align}
and the last identity follows by Lemma~\ref{lm:comb-tech}. Putting identity \eqref{e:regr-part} and inequality \eqref{e:fdm-set} together gives
\[
	\E\big[R_T (\bx)\big]
\le
	\left( \sum_{v\in V'}
		q_v c_v
	\sqrt{\frac{1}{Q_v}} \right)
	\left(\frac{D}{\eta} + \frac{\eta L^2}{2\sigma} \right)
	\sqrt{T}
\le 
	\sqrt{\sum_{v\in V'} \frac{q_v c_v}{Q_v}}
	\left(\frac{D}{\eta} + \frac{\eta L^2}{2\sigma} \right)
	\sqrt{T}
\]
where in the last inequality we used Jensen inequality and $\sum_{v\in V'} q_v c_v \le 1$. This concludes the proof.
\end{proof}
We now prove that the inequality $Q(q) \le \alpha_G$ is always true up to a small constant factor.
\begin{lemma}
\label{l:bandCoop}
Let $G=(V,E)$ be an undirected graph. For all $v\in V$, choose numbers $q_v \in (0,1]$ and define $c_v$ and $Q_v$ as in \eqref{e:c_v} and \eqref{e:Q_v-many} respectively. Then 
\[
	Q = \sum_{v\in V} \frac{q_v c_v}{Q_v}
\le
	\frac{\alpha_G + 1}{1-e^{-1}}
\]
\end{lemma}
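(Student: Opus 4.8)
The plan is to partition $V$ according to whether the ``linear'' neighborhood mass $Q_v' \defeq \sum_{w \in \cn_v} q_w$ reaches $1$, and to bound the two parts by different devices: the part where $Q_v' < 1$ via Lemma~\ref{lm:bound-alpha}, and the part where $Q_v' \ge 1$ via the fact that the numbers $q_v c_v$ behave like (sub)convex coefficients. First I would record three elementary facts. (i) From the computation in the proof of Lemma~\ref{lm:comb-tech} (with $m = N$), $q_v c_v = \E\big[ X(v) / \sum_{w \in V} X(w) \big]$ where the $X(w)$ are independent $\mathrm{Bernoulli}(q_w)$ and the ratio is read as $0$ whenever $X(v) = 0$; summing over $v \in V$ and using $\sum_{v \in V} X(v) = \sum_{w \in V} X(w)$ gives $\sum_{v \in V} q_v c_v = \Pr\big( \sum_{w \in V} X(w) \ge 1 \big) \le 1$ (also noted in the proof of Theorem~\ref{t:upper-coop-many-appe}). (ii) Each $c_v$ is a convex combination of numbers $1/(1+|S|) \le 1$, so $c_v \le 1$ and hence $q_v c_v \le q_v$. (iii) Since $1 - x \le e^{-x}$, from~\eqref{e:Q_v-many} we have $Q_v = 1 - \prod_{w \in \cn_v}(1-q_w) \ge 1 - e^{-Q_v'}$.

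Set $B \defeq \{v \in V : Q_v' \ge 1\}$. For $v \in V \setminus B$ we have $Q_v' \in (0,1)$ (positive because $v \in \cn_v$ and $q_v > 0$), and the concavity of $x \mapsto 1 - e^{-x}$ on $[0,1]$ gives the chord bound $1 - e^{-x} \ge (1 - e^{-1})\,x$ there; together with~(iii) this yields $Q_v \ge (1 - e^{-1})\,Q_v'$. Using~(ii),
\[
	\sum_{v \in V \setminus B} \frac{q_v c_v}{Q_v}
\le
	\sum_{v \in V \setminus B} \frac{q_v}{(1 - e^{-1})\,Q_v'}
\le
	\frac{1}{1 - e^{-1}} \sum_{v \in V} \frac{q_v}{\sum_{w \in \cn_v} q_w}
\le
	\frac{\alpha_G}{1 - e^{-1}} ,
\]
where the middle step drops nonnegative terms and the last step is Lemma~\ref{lm:bound-alpha} (its proof uses only $q_v \ge 0$ and $\sum_{w \in \cn_v} q_w > 0$, not that $q$ sums to $1$; alternatively, rescaling all $q_v$ by $\sum_u q_u$ leaves every ratio unchanged). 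For $v \in B$ we have $Q_v' \ge 1$, so by~(iii) and monotonicity $Q_v \ge 1 - e^{-1}$, whence, using~(i),
\[
	\sum_{v \in B} \frac{q_v c_v}{Q_v}
\le
	\frac{1}{1 - e^{-1}} \sum_{v \in B} q_v c_v
\le
	\frac{1}{1 - e^{-1}} \sum_{v \in V} q_v c_v
\le
	\frac{1}{1 - e^{-1}} .
\]
Adding the two displays gives $Q = \sum_{v \in V} q_v c_v / Q_v \le (\alpha_G + 1)/(1 - e^{-1})$.

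The step I expect to be the crux is the treatment of the ``saturated'' vertices in $B$: there $Q_v$ may be far below the linear mass $Q_v'$, so Lemma~\ref{lm:bound-alpha} cannot be applied, and one must instead fall back on the global constraint $\sum_v q_v c_v \le 1$ from~(i) --- which is precisely what produces the additive ``$+1$''. Everything else reduces to $1 - x \le e^{-x}$ and the chord inequality for $1 - e^{-x}$ on $[0,1]$. As a side remark, one can avoid both constants: running the greedy vertex-peeling of Lemma~\ref{lm:bound-alpha} with respect to the present $Q_v$ produces an independent set $\{w_1, \dots, w_m\}$ with $m \le \alpha_G$ and a partition of $V$ into blocks $\cn'_{w_k} \subseteq \cn_{w_k}$ on which $Q_v \ge Q_{w_k}$; then $\sum_{v \in \cn_{w_k}} q_v c_v = \E\big[ \sum_{v \in \cn_{w_k}} X(v) / \sum_{w \in V} X(w) \big] \le \Pr\big( \exists\, v \in \cn_{w_k}:\, X(v) = 1 \big) = Q_{w_k}$ gives $Q \le m \le \alpha_G$, which is stronger than claimed but not needed here.
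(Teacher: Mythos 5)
Your proof is correct and follows essentially the same route as the paper's: split $V$ according to whether $\sum_{w\in\cn_v}q_w$ is at least $1$, bound the unsaturated part via $1-x\le e^{-x}$, the chord inequality $1-e^{-x}\ge(1-e^{-1})x$, $c_v\le 1$ and Lemma~\ref{lm:bound-alpha}, and the saturated part via $Q_v\ge 1-e^{-1}$ together with $\sum_v q_v c_v\le 1$. Your justification of $Q_v\ge 1-e^{-1}$ on the saturated set (and of why Lemma~\ref{lm:bound-alpha} applies to unnormalized weights) is if anything a bit cleaner than the paper's, and your closing remark correctly observes that a greedy-peeling variant would even give $Q\le\alpha_G$.
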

\begin{proof}
Let $P_v = \sum_{w\in \cn_v} q_w$, $V_1 = \big\{ v\in V \mid P_v \ge 1\big\}$, and $V_0 = \big\{ v\in V \mid P_v < 1\big\}$. We begin by splitting the sum as follows
\[
	\sum_{v\in V} \frac{q_v c_v}{Q_v}
=
	\sum_{v\in V_1} \frac{q_v c_v}{Q_v}
	+\sum_{v\in V_0} \frac{q_v c_v}{Q_v}
\]
We upper bound the two terms separately. Since the minimum $\min_{v\in V_1} Q_v$ is attained when $q_v = 1/|\cn_v|$ for all $v\in \cn_v$, we can lower bound, for each $v\in V_1$,
\[
	Q_v \ge 1 - \left(1 - \frac{1}{|\cn_v|}\right)^{|\cn_v|} \ge 1 - e^{-1}
\]
This together with $\sum_{v\in V} q_v c_v \le 1$ yields
\[
	\sum_{v\in V_1} \frac{q_v c_v}{Q_v} \le \frac{1}{1 - e^{-1}}
\]
To upper bound the sum over $V_0$, we first use the inequality $1-x \le e^{-x}$ that holds for all $x\in[0,1]$. Setting $x = q_w$ gives
\[
	Q_v \ge 1 - \exp \left( - \sum_{w \in \cn_v} q_w \right) = 1 - e^{- P_v}
\]
For all $v\in V_0$, we can then use the inequality $1-e^{-x} \ge (1-e^{-1})x$, holding for all $x\in[0,1]$. Setting $x = P_v < 1$ we conclude that $Q_v \ge (1-e^{-1})P_v$ for all $v\in V_0$. Finally, using $c_v \le 1$ we can write
\[
	\sum_{v\in V_0} \frac{c_v q_v}{Q_v} \le \frac{1}{1-e^{-1}} \sum_{v\in V} \frac{q_v}{P_v} \le 	\frac{\alpha_G}{1-e^{-1}}
\]
where the last inequality follows by Lemma~\ref{lm:bound-alpha}. Putting everything together gives the result.
\end{proof}
\textcolor{white}{.}

\end{document}